\newtheorem{theorem}{Theorem}
\newtheorem{lemma}{Lemma}
\newtheorem{corollary}{Corollary}
\definecolor{mintgreen1}{HTML}{98FF98}
\colorlet{mintgreen}{mintgreen1!15!white}
\renewcommand{\bm}[1]{\mathbf{#1}}
\title{FRAM: Frobenius-Regularized Assignment Matching with Mixed-Precision Computing}
\author[1,2]{Binrui Shen}
\author[3]{Yuan Liang}
\author[4,5]{Shengxin Zhu$^\nmid$}
\affil[1]{School of Mathematical Sciences, Laboratory of Mathematics and Complex Systems, MOE, 
Beijing Normal University, Beijing 100875, P.R. China}
\affil[2]{Faculty of Arts and Sciences, Beijing Normal University, Zhuhai 519087, P.R. China}
\affil[3]{School of Mathematical Sciences,
Beijing Normal University, Beijing 100875, P.R. China}
\affil[4]{Research Centers for Mathematics, Advanced Institute of Natural Science, Beijing Normal University, Zhuhai 519087, P.R. China}
\affil[5]{Guangdong Provincial Key Laboratory of Interdisciplinary Research and Application for Data Science, BNU-HKBU United International College, Zhuhai 519087, P.R. China}
\affil[ ]{\texttt{binrui.shen@bnu.edu.cn, l.y@mail.bnu.edu.cn, Shengxin.Zhu@bnu.edu.cn}}
\begin{document}

\maketitle

\begin{abstract}
Graph matching, typically formulated as a Quadratic Assignment Problem (QAP), seeks to establish node correspondences between two graphs. To address the NP-hardness of QAP, some existing methods adopt projection-based relaxations that embed the problem into the convex hull of the discrete domain.  However, these relaxations inevitably enlarge the feasible set, introducing two sources of error: numerical scale sensitivity and geometric misalignment between the relaxed and original domains. To alleviate these errors, we propose a novel relaxation framework by reformulating the projection step as a Frobenius-regularized Linear Assignment (FRA) problem, where a tunable regularization term mitigates feasible region inflation. This formulation enables normalization-based operations to preserve numerical scale invariance without compromising accuracy.  To efficiently solve FRA, we propose the Scaling Doubly Stochastic Normalization (SDSN) algorithm. Building on its favorable computational properties, we develop a theoretically grounded mixed-precision architecture to achieve substantial acceleration.
Comprehensive CPU-based benchmarks demonstrate that FRAM consistently outperforms all baseline methods under identical precision settings. When combined with a GPU-based mixed-precision architecture, FRAM achieves up to 370× speedup over its CPU-FP64 counterpart, with negligible loss in solution accuracy.
\end{abstract}

\section{Introduction}
\label{sec:intro}

Graph matching aims to find correspondences between graphs with potential relationships. It can be used in various fields of intelligent information processing, e.g., detection of similar pictures \cite{shen2020fabricated}, graph similarity computation \cite{lan2022aednet,lan2022more}, knowledge graph alignment \cite{xu2019cross}, autonomous driving \cite{song2023graphalign}, alignment of vision-language models \cite{nguyen2024logramedlongcontextmultigraph}, point cloud registration \cite{fu2021robust}, deep neural network fusion \cite{liu22k}, multiple object tracking \cite{he2021learnable}, and COVID-19 disease mechanism study \cite{gordon2020comparative}. However, it is known that the exact graph matching problem is a typical NP hard discrete optimization problem \cite{sahni1976p}, and it is computationally prohibitive to obtain a matching for large-scale graphs. 

To scale up the graph matching problem, many relaxation methods have been developed \cite{gold1996graduated, leordeanu2005spectral, cour2006balanced, 2010Reweighted, lu2016fast, shen2024adaptive}.  Such methods relax the problem to a continuous domain and then project the continuous solution back to the original discrete space. The doubly stochastic projection is a representative and frequently used recently \cite{zass2006doubly,lu2016fast}. The method maps the gradient matrix onto the convex hull of the original domain—is particularly representative. However, such relaxations inevitably enlarge the feasible region, leading to two key sources of errors: (1) geometric misalignment between the relaxed and original domains, which undermines the quality of the recovered integer solution; and (2) the numerical scale invariance of the quadratic objective, which is destroyed under the projections.  Such limitations motivate our current research.

In this work, we propose a novel relaxation framework within which the solution trajectory stays close to the original region, thereby enhancing solution accuracy and preserving numerical scale invariance. Furthermore, we introduce an innovative and theoretically grounded mixed-precision architecture, which leads to substantial acceleration of our graph matching algorithm.

Our main contributions are summarized as follows.

\begin{enumerate}
\item \textbf{Theoretical Results}:
We reformulate the doubly stochastic projection to a Frobenius-regularized linear assignment problem (FRA), where a tunable regularizer reduces relaxation-induced distortion. We further analyze the regularization parameter’s impact on performance and characterize the convergence point.
\item \textbf{Algorithm}:
We propose a graph matching algorithm FRAM that solves the QAP approximately by iteratively solving a sequence of FRA. To solve the FRA, we propose a Scaling Doubly Stochastic Normalization (SDSN), incorporating a parametric scaling mechanism that remains robust under varying numerical scales.
\item \textbf{Computing acceleration}: 
 We design a theoretically grounded mixed-precision architecture for the FRAM.  Compared to CPU-based double-precision computation, it achieves over 370× speedup on an NVIDIA RTX 4080 SUPER GPU in some tasks, without compromising accuracy. To the best of our knowledge, this is the first graph matching algorithm built upon a theoretically grounded mixed-precision architecture.
\end{enumerate}
\section{Related Works}
We briefly review three representative classes of algorithms relevant to our study: doubly stochastic optimization methods, spectral-based approaches, and optimal transport-based techniques. For a comprehensive overview of graph matching algorithms, readers are referred to existing survey articles such as \cite{emmert2016fifty, yan2016short}. Graduated assignment (GA) \cite{gold1996graduated} is one of the earliest continuous optimization algorithms. It transforms the quadratic assignment problem into a sequence of linear assignment approximations (called softassign). \citet{shen2024adaptive} develop an adaptive softassign that automatically tunes an entropic parameter. \citet{tian2012convergence} demonstrate that GA with discrete projections may get into a cycle solution. The integer projected fixed point (IPFP) \cite{leordeanu2009integer} projects a gradient matrix into a permutation matrix and updates the solution by a convex combination of the previous iterate and the projected gradient matrix. \citet{zass2006doubly} develop a doubly stochastic normalization (DSN) to find the nearest doubly stochastic matrix for a given symmetric matrix. \citet{lu2016fast} adapt the DSN \cite{zass2006doubly} so that the projected gradient matrix is doubly stochastic. Similar iterative formulas include the spectral-based algorithms \cite{leordeanu2005spectral, shen2025lightning} that recover assignments by transforming the leading eigenvector of a compatibility matrix into the matching matrix. \citet{cour2006balanced} further add affine constraints to better approximate the original problem, which retains spectral methods' speed and scalability benefits. \citet{hermanns2023grasp} establish a correspondence between functions obtained from eigenvectors of the Laplacian matrix, which encode multiscale structural features. For OT-based algorithms, Gromov-Wasserstein Learning (GWL) \cite{xu2019gromov} measures the distance between two graphs by the Gromov-Wasserstein discrepancy and matches graphs by optimal transport. S-GWL \cite{xu2019scalable}, a scalable variant of GWL, divides matching graphs into small graphs for matching to enhance efficiency. 

Mixed-precision computing is a sophisticated technology for accelerating computationally intensive applications, Such technology has been successfully applied for linear systems \cite{connolly2021stochastic} \cite{henry2019leveraging}and Deepseek-V3 \cite{liu2024deepseek} while little attention is received in the graph matching context due to a shortage of theoretical insights. The principal challenges in mixed-precision computing stem from (1) the limited dynamic range of lower-precision formats, which induces a risk of overflow, and (2) cumulative numerical errors during lower-precision operations, necessitating rigorous mathematical analysis for error propagation characterization and stability guarantees \cite{narang2017mixed}. See \cite{higham2022mixed} for details on the mixed-precision algorithm. In this paper, we obtained some theoretical insights which can guarantee a mixed precision algorithm works stably in the graph matching context and achieves a significant speedup. 


\section{Preliminaries}
This section introduces the attributed graph, the matching matrix for graph correspondences, the formulation of graph matching problems, and a projected fixed-point method for solving them.

\textbf{Graph}. An \textit{undirected attributed graph} $G=\{V,E,A,F\}$ consists of a finite set of nodes $V = \{1, \dots ,n\}$ and a set of edges $E \subset V \times V$. $A$ is a nonnegative symmetric \textit{edge attribute matrix} whose element $A_{ij}$ represents an attribute of $E_{ij}$. The $i_{th}$ row of feature matrix $F$ represents the attribute vector of $V_{i}$. 

\textbf{Matching matrix}. Given two attributed graphs $G=\{V,E,A,F\}$ and $\tilde{G}=\{\tilde{V},\tilde{E},\tilde{A},\tilde{F}\}$, we first consider the same cardinality of the vertices $n = \tilde{n}$ for simplicity. A matching matrix $M\in \mathbb{R}^{n\times n}$ can encode the correspondence between nodes:
\( M_{i\tilde{i}} = 1 \) if node \( i \) in \( G \) matches node \( \tilde{i} \) in \( \widetilde{G} \), and \( M_{i\tilde{i}} = 0 \) otherwise. Subject to the one-to-one constraint, a matching matrix is a permutation matrix. The set of permutation matrices is denoted as $\Pi_{n \times n} = \{M: M\mathbf{1}=\mathbf{1},{M}^{T}\mathbf{1}=\mathbf{1}, {M}\in\{0,1\}^{n\times n}\}$, where $\mathbf{1}$ represents vectors with all-one.

\textbf{Problem and continuous optimization}. The graph matching problem is normally formulated as a quadratic assignment problem (QAP) that is NP-hard \cite{garey1979computers}.  A common trick for solving such discrete problems is relaxation that first finds a solution on $\mathcal{D}_{n \times n}:=\{N: N\mathbf{1}=\mathbf{1}, N^{T}\mathbf{1}=\mathbf{1}, N \geq 0$\} which is the convex hull of the original domain:
\begin{equation}
\begin{array}{cc}
   N^* =  \arg\max\limits_{N\in \mathcal{D}_{n \times n}}    \Phi(N), \quad  \Phi(N)=\frac{1}{2} \underbrace{ \langle {N}^{T} {A} {N}, {\widetilde{A}}\rangle}_{\text{Edges' similarites}}+ \lambda \underbrace{\langle{N}^{T}{F,\tilde{F}}\rangle}_{\text{Nodes' similarites}},  
\end{array}
\label{eq.Object}
\end{equation}
where $\lambda$ is a parameter and $\langle \cdot,\cdot\rangle$ represents the Frobenius inner product. $N^*$ is transformed back to the original discrete domain $\Pi_{n\times n}$ by solving a \textit{linear assignment problem}:
\begin{equation}
     M = \arg \min_{P\in \Pi_{n \times n}} \| P -  {N^*}\|_{F}.
    \label{eq. linear assignment}
\end{equation}
The matrix $M$ is the final solution for graph matching.

\textbf{The Projected Fixed-Point Method}.  
Many methods \cite{gold1996graduated,cour2006balanced, leordeanu2009integer, lu2016fast,shen2024adaptive}  adopt the same iterative framework to efficiently solve \eqref{eq.Object}:
\begin{equation}
\begin{array}{cc}
    N^{(t+1)}=(1- \alpha)N^{(t)}  + \alpha D^{(t)} ,
    \label{iter.DSPFP}\\
    D^{(t)}=\mathcal{P}(\nabla \Phi(N^{(t)}))  = \mathcal{P}(AN^{(t)}\tilde{A}+\lambda  F\tilde{F}^T),
\end{array}
\end{equation}
where $\alpha$ is a step size parameter and $\mathcal{P}(\cdot)$ is an operator to enforce the gradient matrix into a certain region. When the solution domain is relaxed to the convex hull of the original domain (the set of doubly stochastic matrices), a natural choice of $\mathcal{P}(\cdot)$ is the doubly stochastic projection \cite{zass2006doubly,lu2016fast}. It finds the closest doubly stochastic matrix to the gradient matrix $\nabla \Phi(N^{(t)})$ in terms of the Frobenius norm:
\begin{equation}
\mathcal{P_{D}}(X) = \arg \min_{D \in \mathcal{D}_{n \times n}} \|D - X\|_{F}.
\label{eq:doubly_assign}
\end{equation}
The resulting algorithm is doubly stochastic projected fixed-point method (DSPFP) \cite{lu2016fast}.

\section{Projection to Assignment}
We first propose a regularized linear assignment formulation by exploring the numerical sensitivity of the doubly stochastic projection. We then analyze how the regularization parameter affects performance.
\label{sec:method}

\subsection{Doubly stochastic projection to assignment}
It is straightforward to observe that the solution to the quadratic assignment problem~\eqref{eq.Object} also maximizes \( w \Phi(N) \), where \( w \) is a positive scale constant. This reflects the numerical scale-invariant property of the objective. However, the doubly stochastic projection \(\mathcal{P}_{\mathcal{D}}(\cdot)\) fails to preserve this property:
\begin{equation}
\mathcal{P_{D}}(X) \neq \mathcal{P_{D}}(wX).
\end{equation}
 To explore the reason for this, consider:
\begin{equation}
         \mathcal{P_{D}}(wX) = \arg\min_{D \in \mathcal{D}_{n \times n}} \|D - wX\|_{F}  = \arg\min_{D \in \mathcal{D}_{n \times n}} \|D - wX\|^2_F.     
\end{equation}
By expanding the norm, we have
\begin{equation}
    \|D - wX\|^2_F =\langle D, D \rangle - 2\langle D, wX \rangle + \langle wX, wX \rangle.
\end{equation}

Since \(\langle wX, wX \rangle\) is invariant with respect to \(D\), it does not affect the optimization. Thus, 
\begin{equation}
\mathcal{P_{D}}(wX) = \arg\min_{D \in \mathcal{D}_{n \times n}} \langle D, D \rangle - 2w\langle D, X \rangle.
\end{equation}
As a result, the problem reduces to
\begin{equation}
\mathcal{P_{D}}(wX) = \arg\max_{D \in \mathcal{D}_{n \times n}} \langle D, X \rangle -\frac{1}{2w}\langle D, D \rangle.
\label{eq.fra}
\end{equation}
$\langle D, X \rangle$ represents an assignment score. Since \( X \) corresponds to the gradient matrix \( \nabla \Phi(N^{(t)}) \) in~\eqref{iter.DSPFP}, it is intuitive that higher assignment scores in~\eqref{eq.fra} tend to result in higher objective values \eqref{eq.Object} during the iterative refinement process.
As \( w \) increases, the projection process emphasizes optimizing the assignment score. Conversely, when \( w \) decreases, the significance of the assignment score diminishes. Therefore, the matching performance is sensitive to the scale constant \( w \).

To eliminate the impact of scaling variations, we normalize the input matrix \( X \) by replacing \( X \) with \( \frac{X}{\max(X)} \). Furthermore, we introduce a parameter \( \theta \) to explicitly control the significance of the assignment score. The new problem is formalized as follows.

\begin{theorem}
The solution of the doubly stochastic projection with a scaling parameter $\theta$ 
\begin{equation}
D_X^\theta = \arg\min_{D \in \mathcal{D}_{n \times n}} \|D - \frac{\theta}{2}X\|_{F}^2
\label{eq:dsp_assignment}
\end{equation}
is the solution of a Frobenius-regularized linear assignment problem
\begin{equation}
         D_X^\theta = \arg\max_{D \in \mathcal{D}_{n \times n}}  \Gamma^{\theta}(X),  \quad   \Gamma^{\theta}(X) = \langle D, X \rangle -\frac{1}{\theta}\langle D, D \rangle.
\end{equation}
\end{theorem}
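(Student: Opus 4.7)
The plan is a direct algebraic manipulation mirroring the derivation already used for $\mathcal{P}_\mathcal{D}(wX)$ earlier in the section, only now with the scaling absorbed into a single parameter $\theta$. First I would expand the squared Frobenius norm in the definition of $D_X^\theta$:
\begin{equation*}
\bigl\|D - \tfrac{\theta}{2}X\bigr\|_F^2 \;=\; \langle D,D\rangle \;-\; \theta\,\langle D,X\rangle \;+\; \tfrac{\theta^2}{4}\langle X,X\rangle.
\end{equation*}

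Next I would observe that the third term is constant in $D$ and therefore irrelevant to the argmin over $\mathcal{D}_{n\times n}$. Dropping it and negating the objective turns the minimization into
\begin{equation*}
D_X^\theta \;=\; \arg\max_{D\in\mathcal{D}_{n\times n}} \; \theta\,\langle D,X\rangle - \langle D,D\rangle.
\end{equation*}
Since $\theta>0$, dividing the objective by $\theta$ does not change the maximizer, yielding
\begin{equation*}
D_X^\theta \;=\; \arg\max_{D\in\mathcal{D}_{n\times n}} \; \langle D,X\rangle - \tfrac{1}{\theta}\langle D,D\rangle \;=\; \arg\max_{D\in\mathcal{D}_{n\times n}} \Gamma^\theta(X),
\end{equation*}
which is exactly the claimed Frobenius-regularized linear assignment form.

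There is no real obstacle here: the statement is essentially a bookkeeping restatement of the earlier expansion of $\mathcal{P}_\mathcal{D}(wX)$ with $w=\theta/2$, made clean by pulling the positive constant $\theta$ out of the argmax. The only minor point worth flagging in the write-up is explicitly recording that $\theta>0$ is needed to preserve the direction of optimization when rescaling the objective, and that the normalization $X \leftarrow X/\max(X)$ motivating the parameter $\theta$ is a preprocessing convention external to this identity, so the theorem itself holds for any matrix $X$ and any positive $\theta$.
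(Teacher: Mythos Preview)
Your proposal is correct and follows essentially the same approach as the paper's own proof: expand the squared Frobenius norm, drop the $D$-independent constant, flip the sign to convert $\arg\min$ to $\arg\max$, and divide by the positive scalar $\theta$. The paper's version is terser and omits the explicit remark that $\theta>0$ is required for the final rescaling, so your added comment on that point is a small improvement in rigor.
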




\subsection{Parameter Impact}
To establish this analysis, we quantify the solution quality through a distance metric between $D^{\theta}_X$ and $D^{\infty}_X$ for a given matrix $X \in \mathbb{R}^{n \times n}$. 
The total assignment score can be defined as $\langle D^{\theta}, X \rangle$, which scales with the problem size. To enable scale-independent error analysis, we use a  \textit{normalized assignment error} to define the performance gap: 
\begin{equation} 
\epsilon^{\theta}_{X} = \frac{1}{n} \left( \langle D^{\infty}_X, X \rangle - \langle D^{\theta}_X, X \rangle \right) 
\end{equation} 
This normalization effectively decouples the approximation error from the problem scale, providing a stable metric to assess the quality of $D^{\theta}_X$ across varying nodes' numbers.


\begin{theorem}
For a nonnegative matrix $X \in \mathbb{R}^{n \times n}$, the following inequality holds:

\begin{equation}
\begin{array}{cc}
    \epsilon^{\theta}_{X}\leq  \frac{1}{\theta}. 
\end{array}
\end{equation}
\end{theorem}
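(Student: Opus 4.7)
The plan is to exploit the optimality of $D^{\theta}_X$ for the regularized functional $\Gamma^{\theta}(\cdot)$ directly, then bound the Frobenius norms that appear using the geometry of the Birkhoff polytope. The key observation is that the penalty term $\frac{1}{\theta}\langle D,D\rangle$ is uniformly bounded over $\mathcal{D}_{n\times n}$, so the optimizer of the penalized program cannot be too far from the optimizer of the unpenalized one, measured in assignment score.

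First, I would write down the defining optimality inequality. Because $D^{\theta}_X$ maximizes $\Gamma^{\theta}(\cdot)$ over $\mathcal{D}_{n\times n}$ and $D^{\infty}_X \in \mathcal{D}_{n\times n}$,
\begin{equation}
\langle D^{\theta}_X, X\rangle - \tfrac{1}{\theta}\langle D^{\theta}_X, D^{\theta}_X\rangle \;\geq\; \langle D^{\infty}_X, X\rangle - \tfrac{1}{\theta}\langle D^{\infty}_X, D^{\infty}_X\rangle.
\end{equation}
Rearranging yields
\begin{equation}
\langle D^{\infty}_X, X\rangle - \langle D^{\theta}_X, X\rangle \;\leq\; \tfrac{1}{\theta}\bigl(\|D^{\infty}_X\|_F^2 - \|D^{\theta}_X\|_F^2\bigr).
\end{equation}

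Next, I would control the right-hand side. By Birkhoff's theorem, the linear objective $\langle D, X\rangle$ attains its maximum over the doubly stochastic polytope at a vertex, i.e., at a permutation matrix, so $D^{\infty}_X$ can be taken to be a permutation matrix and hence $\|D^{\infty}_X\|_F^2 = n$. On the other side, $\|D^{\theta}_X\|_F^2 \geq 0$ trivially (one could sharpen this to $\geq 1$ using that doubly stochastic rows sum to one, but it is not needed). Substituting gives
\begin{equation}
\langle D^{\infty}_X, X\rangle - \langle D^{\theta}_X, X\rangle \;\leq\; \tfrac{n}{\theta},
\end{equation}
and dividing by $n$ yields the claimed $\epsilon^{\theta}_X \leq 1/\theta$.

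The argument is essentially a one-line variational inequality, so there is no serious obstacle. The only step requiring care is the reduction to a permutation $D^{\infty}_X$; I would make explicit that we are selecting a vertex optimizer of the LP so that $\|D^{\infty}_X\|_F^2 = n$ is justified rather than merely assumed. Notably, the nonnegativity hypothesis on $X$ stated in the theorem is not actually used in this bound—it is consistent with the graph matching setting in which this lemma will be applied but is not essential to the inequality itself.
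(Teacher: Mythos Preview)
Your argument is correct and follows essentially the same variational-inequality route as the paper: compare $\Gamma^{\theta}$ at $D^{\theta}_X$ and $D^{\infty}_X$, then bound the Frobenius-norm terms. The only cosmetic difference is that the paper uses $\|D^{\infty}_X\|_F^2 \le n$ together with $\|D^{\theta}_X\|_F^2 \ge 1$ to obtain the slightly sharper intermediate bound $(n-1)/\theta$ before dividing by $n$, whereas you fix $D^{\infty}_X$ to be a permutation (so $\|D^{\infty}_X\|_F^2 = n$) and use the trivial lower bound $\|D^{\theta}_X\|_F^2 \ge 0$.
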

\noindent This theorem shows that the performance gap between $D^\theta_X$ and $D^\infty_X$ is bounded by $1/\theta$. 

\subsection{Convergence to optimal assignment}
To better understand the behavior of FRA under varying \( \theta \), we analyze the limiting behavior of \( D_X^\theta \) as \( \theta \) approaches 0 and infinity. 
\begin{theorem}
    As $\theta \to \infty$, the matrix $D_X^\theta$ converges to the unique matrix $D^*$. \( D^* \) minimizes the regularization term within the set \( \mathcal{F} \), the convex hull of the optimal permutation matrices.
\end{theorem}
This theorem reveals a key advantage of FRA: unlike standard linear assignment solvers \cite{leordeanu2009integer} that return a single solution and discard others, \( D_X^\theta \) approximates a convex combination of all optimal permutations, preserving richer solution information.

\begin{corollary}
If there is only one optimal permutation, then $D^{\theta}_X$ converges to the corresponding permutation matrix.
\end{corollary}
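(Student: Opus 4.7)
The plan is to deduce this corollary directly from Theorem 3 by specializing the set $\mathcal{F}$ to the single-optimum case. By Theorem 3, $D_X^\theta \to D^*$ where $D^*$ is the unique minimizer of the regularization term $\langle D, D\rangle$ over $\mathcal{F}$, the convex hull of the optimal permutation matrices of the linear assignment problem with cost $X$.

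First I would observe that if there is only one optimal permutation $P^*$, then the set of optimal permutation matrices is the singleton $\{P^*\}$, and therefore its convex hull satisfies $\mathcal{F} = \{P^*\}$. Since $D^*$ is defined as the minimizer of a function over $\mathcal{F}$, and $\mathcal{F}$ contains exactly one element, this minimizer must equal $P^*$. Hence $D^* = P^*$.

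Combining this identification with the convergence statement of Theorem 3, I would conclude that $D_X^\theta \to P^*$ as $\theta \to \infty$, which is exactly the claim of the corollary.

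There is essentially no obstacle here: the only subtlety is making explicit that ``only one optimal permutation'' refers to uniqueness of the maximizer of $\langle P, X\rangle$ over $\Pi_{n\times n}$, which then forces $\mathcal{F}$ to collapse to a single point. Once this is clarified, the result is an immediate specialization of Theorem 3, and no further computation is required.
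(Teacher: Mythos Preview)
Your proposal is correct and matches the paper's approach: the corollary is presented as an immediate consequence of Theorem 3, with no separate proof given, precisely because when the optimal permutation is unique the face $\mathcal{F}$ degenerates to a single point and $D^*$ is forced to equal that permutation.
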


\begin{theorem}
    As $\theta \to 0$, the matrix $D_X^\theta$ converges to the matrix $\frac{\mathbf{1}\mathbf{1}^T}{n}$.
\end{theorem}
Figure~\ref{fig:theta_compare} visualizes how the matrix \( D_X^\theta \) changes with different values of \( \theta \). When \( \theta \) is small, the matrix entries are nearly uniform; as \( \theta \) increases, the matrix progressively approaches a permutation matrix that lies within the original feasible domain of the QAP. This illustrates how \( \theta \) suppresses the bias introduced by relaxation. By selecting a proper value of \( \theta \), the intermediate solution in the matching process is constrained within a relaxed region that remains close to the original feasible domain of graph matching problems.

\begin{figure}[htbp]
    \centering
    \begin{subfigure}[b]{0.25\textwidth}
        \includegraphics[width=\textwidth]{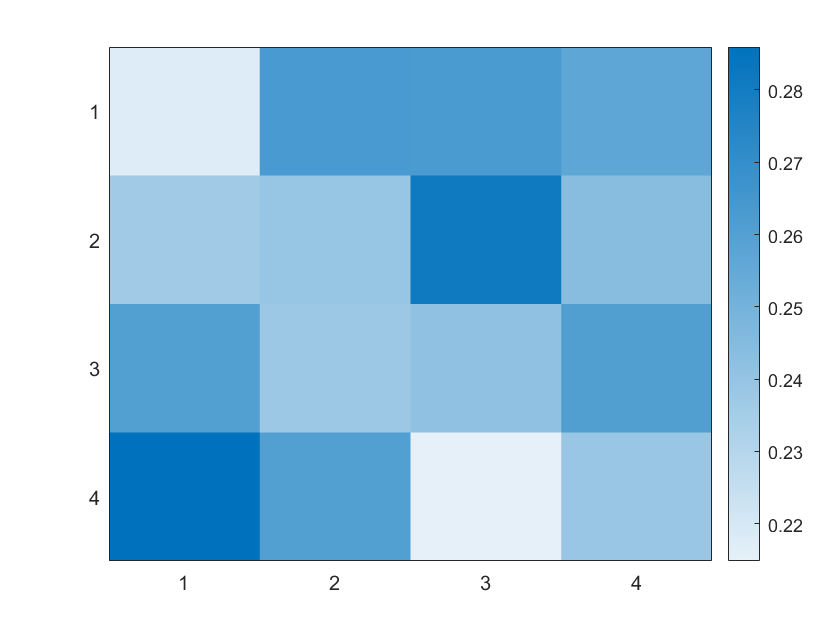}
        \caption{$\theta = 0.1$}
    \end{subfigure}
    \hfill
    \begin{subfigure}[b]{0.25\textwidth}
        \includegraphics[width=\textwidth]{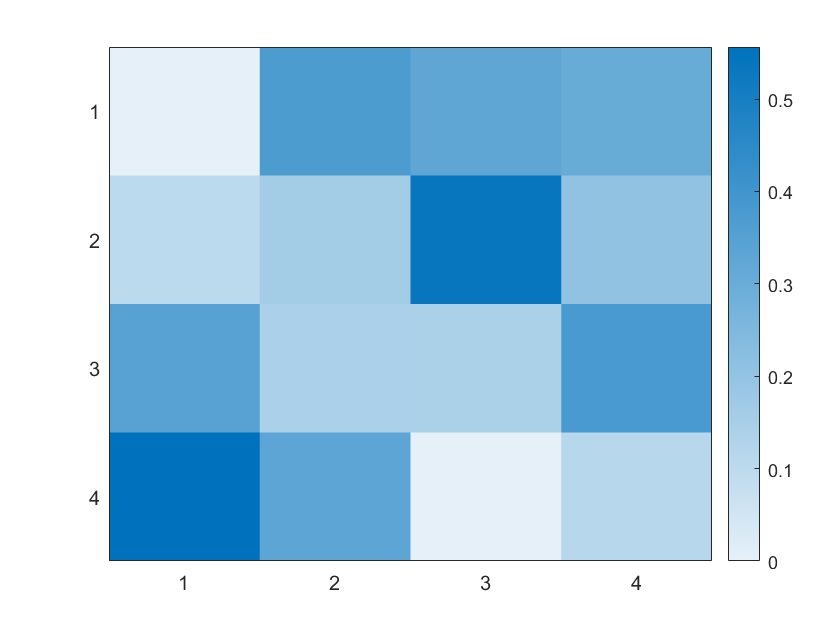}
        \caption{$\theta = 1$}
    \end{subfigure}
    \hfill
    \begin{subfigure}[b]{0.25\textwidth}
        \includegraphics[width=\textwidth]{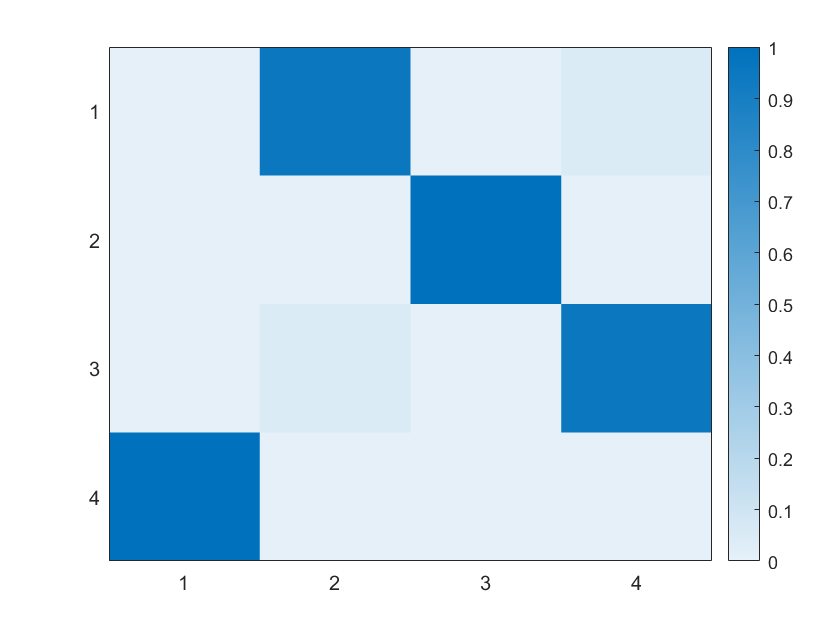}
        \caption{$\theta = 10$}
    \end{subfigure}
    \caption{Visualization of $D_X^\theta$ under different $\theta$ values. Cell color indicates the matrix value; darker colors correspond to larger entries.}
    \label{fig:theta_compare}
\end{figure}
When FRA serves as a module in graph matching \eqref{iter.DSPFP}, increasing $\theta$ typically improves the step-wise score by enforcing sharper matchings. However, excessively large $\theta$ may slightly degrade the final matching performance. This can be understood from a probabilistic perspective: each entry in the doubly stochastic matrix $D_X^\theta$ represents the probability of a potential match. A large $\theta$ leads to over-confident assignments too early, reducing flexibility and hindering the exploration of alternative assignments.

\section{Scaling Doubly Stochastic Normalization}
This section introduces a Scaling Doubly Stochastic Normalization (SDSN) to solve the FRA by demonstrating its applicability to asymmetric problems, defining a convergence criterion, and robustness to truncation residual. 

\subsection{Doubly Stochastic Normalization}
Owing to the equivalence between FRA and the scaling doubly stochastic projection, FRA admits a solution via tailored modifications to standard projection algorithms. \citet{zass2006doubly} solve the doubly stochastic projection \eqref{eq:doubly_assign} by alternating iteration between two sub-problems until convergence:
	\begin{equation}\mathcal{P}_1(X) = \arg\min_{Y\mathbf{1}=Y^T\mathbf{1} = \bm1} \| X - Y \|_{F},\quad \mathcal{P}_2(X) = \arg\min_{Y \geq 0} \| X - Y \|_{F}^2\ 
    \label{eq.p1}
    \end{equation}
The von-Neumann successive projection lemma \cite{vonNeumann1932FunctionalOperatorsII} states that $\mathcal{P}_2\mathcal{P}_1\mathcal{P}_2\mathcal{P}_1 \dots \mathcal{P}_2\mathcal{P}_1(X)$ will converge onto the $\mathcal{P_{D}}(X)$. The derived doubly stochastic normalization (DSN) \cite{zass2006doubly} for symmetric $X$ works as follows.
\begin{equation}
\tilde{X}^{(k)} =\mathcal{P}_1 \left( X^{(k-1)} \right), \ \  X^{(k)} =\mathcal{P}_2 \left(\tilde{X}^{(k)}  \right),
\label{eq.dsp_iter_simple}
\end{equation}
\begin{equation}
\mathcal{P}_1(X) =X + \left( \frac{{I}}{n} + \frac{\mathbf{1}^T X \mathbf{1}}{n^2}I - \frac{\mathbf{X}}{n} \right) \mathbf{1}\mathbf{1}^T - \frac{\mathbf{\mathbf{1} \mathbf{1}^T X}}{n} , \quad \quad \mathcal{P}_2(X) = \frac{X + |X|}{2},
\label{eq.P1_sol}
\end{equation}

where \( I \) is the \( n \times n \) identity matrix. It alternately applies row and column normalization and non-negativity enforcement to ensure that the resulting matrix satisfies the doubly stochastic property. Each iteration requires \( O(n^2) \) operations. 
	
\citet{zass2006doubly} design the DSN for symmetric matrices. Based on this, \citet{lu2016fast} further uses this method for asymmetric matrices. We provide the theoretical foundation as follows.
\begin{theorem}\label{thm1}
    For an asymmetric square matrix $X$, \eqref{eq.P1_sol} is the solution of the projection \eqref{eq.p1}.
\end{theorem}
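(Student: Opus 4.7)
The plan is to derive the formula via Lagrangian analysis of the quadratic program, and then observe that symmetry of $X$ is never actually invoked, so the identical derivation covers the asymmetric case.

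First, I would form the Lagrangian with multipliers $\alpha,\beta\in\mathbb{R}^n$ for the two linear constraints:
$$L(Y,\alpha,\beta)=\tfrac{1}{2}\|Y-X\|_F^{2}-\alpha^{T}(Y\mathbf{1}-\mathbf{1})-\beta^{T}(Y^{T}\mathbf{1}-\mathbf{1}).$$
Stationarity $\nabla_{Y}L=0$ gives $Y=X+\alpha\mathbf{1}^{T}+\mathbf{1}\beta^{T}$. Strict convexity of $\|Y-X\|_F^{2}$ on the affine feasible set guarantees that the minimizer $Y$ itself is unique, even though the multipliers need not be.

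Next, I would substitute this stationary form into the two equality constraints. The row-sum constraint $Y\mathbf{1}=\mathbf{1}$ becomes $n\alpha=\mathbf{1}-X\mathbf{1}-(\mathbf{1}^{T}\beta)\mathbf{1}$, and the column-sum constraint $Y^{T}\mathbf{1}=\mathbf{1}$ becomes $n\beta=\mathbf{1}-X^{T}\mathbf{1}-(\mathbf{1}^{T}\alpha)\mathbf{1}$. Taking $\mathbf{1}^{T}$ of either equation pins down only the sum $\mathbf{1}^{T}\alpha+\mathbf{1}^{T}\beta=1-\tfrac{\mathbf{1}^{T}X\mathbf{1}}{n}$. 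I would then assemble $\alpha\mathbf{1}^{T}+\mathbf{1}\beta^{T}$ as a single block rather than isolating $\alpha$ and $\beta$ individually, so that only this sum enters. Collecting the resulting terms yields
$$Y=X+\tfrac{1}{n}\mathbf{1}\mathbf{1}^{T}+\tfrac{\mathbf{1}^{T}X\mathbf{1}}{n^{2}}\mathbf{1}\mathbf{1}^{T}-\tfrac{X\mathbf{1}\mathbf{1}^{T}}{n}-\tfrac{\mathbf{1}\mathbf{1}^{T}X}{n},$$
which is precisely the formula in \eqref{eq.P1_sol} after recognizing $\mathbf{1}\mathbf{1}^{T}=I\mathbf{1}\mathbf{1}^{T}$ and $X\mathbf{1}\mathbf{1}^{T}=\tfrac{X}{n}\mathbf{1}\mathbf{1}^{T}\cdot n$.

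The main obstacle is the \textbf{gauge redundancy} in the multipliers: the two constraint blocks $Y\mathbf{1}=\mathbf{1}$ and $Y^{T}\mathbf{1}=\mathbf{1}$ are linearly dependent through the global identity $\mathbf{1}^{T}Y\mathbf{1}=n$, so the shift $\alpha\mapsto\alpha+c\mathbf{1},\beta\mapsto\beta-c\mathbf{1}$ leaves $Y$ invariant. A naive linear solve for $(\alpha,\beta)$ will appear singular. I expect to dispatch this by working with the gauge-invariant quantity $\alpha\mathbf{1}^{T}+\mathbf{1}\beta^{T}$, which depends only on $\mathbf{1}^{T}\alpha+\mathbf{1}^{T}\beta$, and then verifying directly that the candidate $Y$ satisfies both constraints—this double-checks optimality without ever picking a gauge. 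Since neither the stationarity step nor the constraint substitution uses $X=X^{T}$, the formula extends verbatim from the symmetric setting of \citet{zass2006doubly} to the asymmetric case.
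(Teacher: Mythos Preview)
Your proposal is correct and follows essentially the same Lagrangian/KKT route as the paper: introduce multipliers for the row- and column-sum constraints, obtain the stationary form $Y=X+\alpha\mathbf{1}^{T}+\mathbf{1}\beta^{T}$, then resolve the multipliers up to the one-parameter gauge freedom to recover \eqref{eq.P1_sol}. The only cosmetic difference is that the paper parametrizes the null space explicitly via scalars $k_1,k_2$ and re-substitutes into the Lagrangian, whereas you work directly with the gauge-invariant combination $\alpha\mathbf{1}^{T}+\mathbf{1}\beta^{T}$; both reach the same formula without ever invoking symmetry of $X$.
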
\label{The.1}

\subsection{Convergence criterion}
An appropriate convergence criterion is notably absent in DSN. \citet{zass2006doubly} terminate iterations when the updated matrix is doubly stochastic. However, this approach can be computationally expensive, leading to inefficiency in large-scale tasks. In contrast, \citet{lu2016fast} fix the number of iterations at 30; while this strategy improves speed, it does not guarantee that the output matrix is doubly stochastic. 

\begin{wrapfigure}{r}{0.42\textwidth}
    \centering
    \vspace{-8pt}  
    \includegraphics[width=\linewidth]{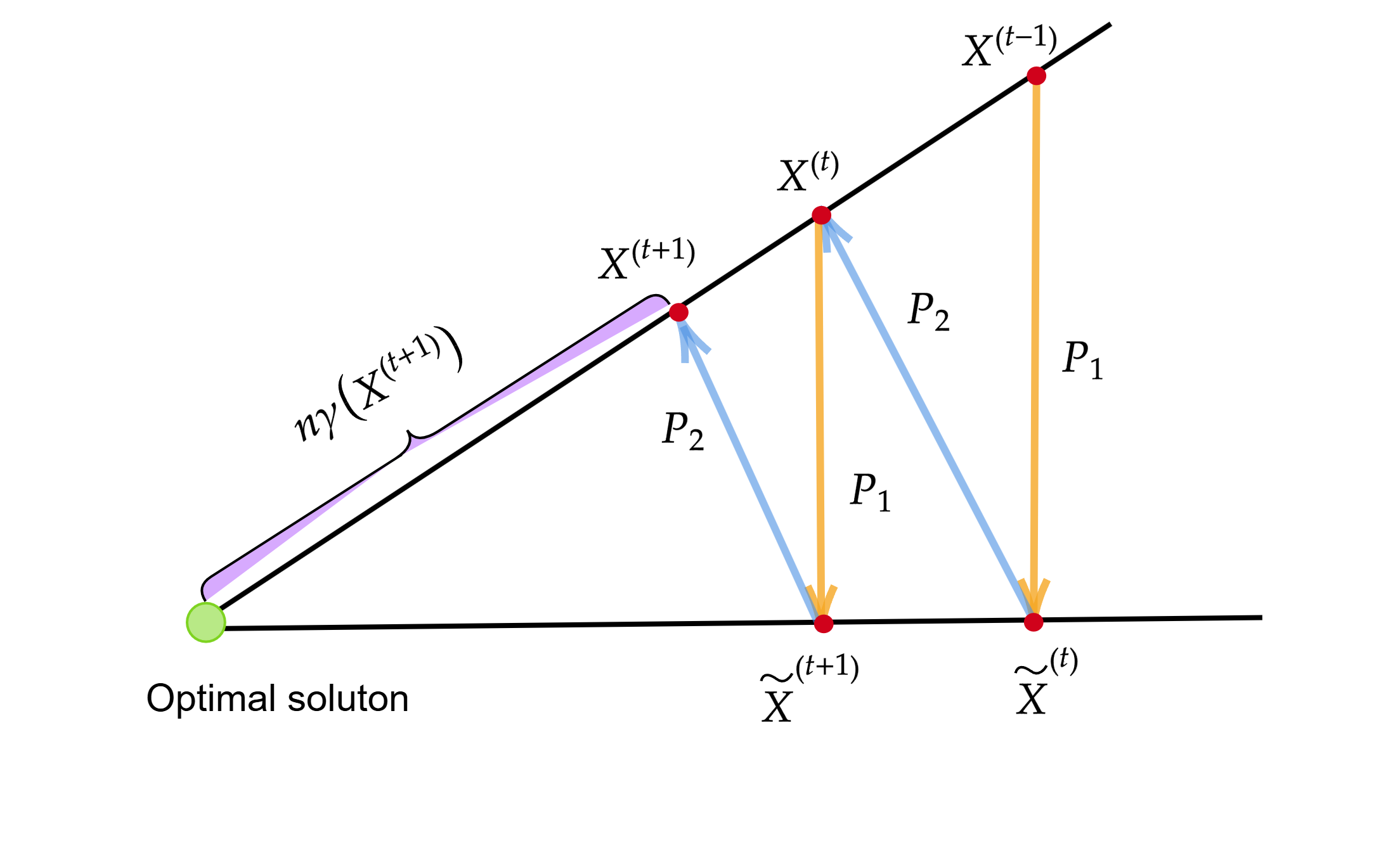}
    \caption{Convergence process of SDSN.}
    \label{Fig.convergence}
\end{wrapfigure}

We propose a criterion to quantitatively measure the distance between the current matrix and the ideal solution, ensuring theoretical soundness, dimension independence, and computational efficiency. Motivated by this, we define a \textbf{dimension-invariant distance measure} as
\begin{equation}
    \gamma(X^{(k)}) = \frac{1}{n}\sum_{i,j} X^{(k)}_{ij} - 1 = \frac{\mathbf{1}^T X^{(k)} \mathbf{1}}{n} - 1.
\end{equation}
Normalization by \( n \) ensures applicability to matrices of any size. %
A detailed derivation of the convergence condition is provided in the appendix. Figure~\ref{Fig.convergence} illustrates this process. 

\subsection{Number of iterations}
The SDSN is summarized in Algorithm \ref{alg.1}. We analyze the influence of the parameter \(\theta\) on the number of SDSN iterations in the following theorem, which shows that the iteration count grows proportionally with the value of \(\theta\).

\begin{theorem}
For a $\mathbf{X} \in \mathbb{R}^{n \times n}$ with $\max(\mathbf{X})=1$, the SDSN algorithm requires 
$$ \left\lceil \frac{\ln\left( \frac{\epsilon}{\theta\| \mathbf{X} \|_F + n} \right)}{\ln(c)} \right\rceil $$ 
iterations to produces a solution $\mathbf{X}^*$ that satisfies $\| \mathbf{X}^* - D_X^\theta \|_F < \epsilon$ where $D_X^\theta$ is the exact solution and $c \in (0, 1)$ is the convergence rate constant of the DSN algorithm.
\end{theorem}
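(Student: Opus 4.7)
\medskip

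The plan is to treat SDSN as the DSN alternating projection scheme applied to the scaled input $\tfrac{\theta}{2}X$, use the linear convergence rate $c$ supplied by the von Neumann successive projection lemma, and then explicitly bound the initial distance to the fixed point.

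First I would write the SDSN iteration in the alternating-projection form $X^{(k)} = \mathcal{P}_2 \mathcal{P}_1 X^{(k-1)}$ with $X^{(0)} = \tfrac{\theta}{2} X$, so that by Theorem~\ref{thm1} the limit point is exactly $D_X^\theta = \mathcal{P}_{\mathcal{D}}(\tfrac{\theta}{2}X)$. Invoking the linear convergence of alternating projections on closed convex sets (with rate $c \in (0,1)$ guaranteed by the geometry of $\{Y : Y\mathbf{1}=Y^T\mathbf{1}=\mathbf{1}\}$ and the nonnegative orthant), I obtain
\begin{equation}
    \| X^{(k)} - D_X^\theta \|_F \;\le\; c^{k}\, \bigl\| \tfrac{\theta}{2}X - D_X^\theta \bigr\|_F .
\end{equation}

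Next I would bound the initial gap by the triangle inequality:
\begin{equation}
    \bigl\| \tfrac{\theta}{2}X - D_X^\theta \bigr\|_F \;\le\; \tfrac{\theta}{2}\|X\|_F + \|D_X^\theta\|_F .
\end{equation}
Because $D_X^\theta$ is doubly stochastic its entries satisfy $0 \le (D_X^\theta)_{ij} \le 1$, so $\|D_X^\theta\|_F^2 \le \sum_{i,j} (D_X^\theta)_{ij} = n$, giving $\|D_X^\theta\|_F \le \sqrt{n} \le n$. Combined with $\tfrac{\theta}{2}\|X\|_F \le \theta\|X\|_F$, this yields the slightly loose but clean estimate $\| \tfrac{\theta}{2}X - D_X^\theta \|_F \le \theta\|X\|_F + n$.

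Finally I would enforce $c^{k}(\theta\|X\|_F + n) < \epsilon$. Taking logarithms and using $\ln c < 0$ (which reverses the inequality direction),
\begin{equation}
    k \;>\; \frac{\ln\bigl( \epsilon / (\theta\|X\|_F + n) \bigr)}{\ln(c)},
\end{equation}
so the smallest integer $k$ satisfying the error target is the ceiling in the statement. The only genuinely subtle step is justifying the linear rate $c$: a rigorous derivation would appeal to the Bauschke--Borwein theory of alternating projections onto two closed convex sets whose relative interiors meet (the permutation polytope $\mathcal{D}_{n\times n}$ is the nonempty intersection here), which guarantees geometric convergence with some rate strictly less than one. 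The remaining manipulations are elementary triangle-inequality and monotonicity arguments.
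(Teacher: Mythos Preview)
Your proposal is correct and follows the same high-level template as the paper (linear contraction of the alternating projections, an explicit bound on the initial distance, then solve the resulting logarithmic inequality for $k$), but the two arguments differ in how the linear rate is justified. The paper does not invoke Bauschke--Borwein; instead it vectorises the map $\mathcal{P}_1\mathcal{P}_2$, writes it as $\text{vec}(X_{k+1}) = A\,\text{vec}(|X_k| + X_k) + \tfrac{1}{n}\text{vec}(\mathbf{1}\mathbf{1}^T)$ with $A = \tfrac{1}{2}\bigl(I - \tfrac{1}{n}\mathbf{1}\mathbf{1}^T\bigr)\otimes\bigl(I - \tfrac{1}{n}\mathbf{1}\mathbf{1}^T\bigr)$ of spectral radius $\tfrac12$, and extracts an error recursion $\|e_{k+1}\| \le c\|e_k\|$ from this concrete structure. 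Your appeal to the general theory of alternating projections is cleaner and entirely adequate given that the statement already posits the rate $c$ as a constant of the DSN scheme; the paper's route, by contrast, gestures at a specific value (near $\tfrac12$) for that constant. Your bound on the initial gap via the triangle inequality and $\|D_X^\theta\|_F \le \sqrt{n} \le n$ is also more transparent than the paper's, which first pulls out the scaling factor $\theta$ and then bounds the unscaled error by $\|X\|_F + n$; this is in fact why the appendix version of the theorem carries $\theta(\|X\|_F+n)$ in the denominator rather than the $\theta\|X\|_F+n$ of the main-text statement you proved.
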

\subsection{Robustness to truncation residual}
We observe that truncation residual vanishes over iterations in SDSN, making it ideal for mixed-precision acceleration. This technique combines high- and low-precision computations (e.g., 16-bit and 32-bit) to reduce memory usage and computation time on modern hardware like GPUs without sacrificing quality. It is especially effective for large-scale problems by identifying computations that tolerate lower precision. We provide a theoretical analysis of stability and accuracy under mixed-precision settings and introduce an acceleration strategy with formal guarantees.

\begin{theorem}[Vanishing Truncation Residual in SDSN Iterations]
Let \( X_k = \hat{X}_k + \Delta X_k \) be the decomposition of the iterate at step \( k \), where \( \Delta X_k \) is the truncated residual. As the algorithm proceeds, this truncation residual is progressively corrected and vanishes in later iterations.
\end{theorem}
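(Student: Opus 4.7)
The plan is to track the truncation residual $\Delta X_k$ through one SDSN step and show it decays despite fresh per-step rounding. Let $T = \mathcal{P}_2 \circ \mathcal{P}_1$ denote the exact SDSN operator, write the computed iteration as $\hat{X}_{k+1} = T(\hat{X}_k) + \eta_{k+1}$ where $\eta_{k+1}$ is the new truncation error introduced at step $k+1$, and subtract from the exact recursion $X_{k+1} = T(X_k)$ to obtain
\[
\Delta X_{k+1} \;=\; \bigl[T(X_k) - T(\hat{X}_k)\bigr] - \eta_{k+1}.
\]
The analysis then reduces to bounding the operator-level sensitivity of $T$ on the error and summing a geometric tail in $k$.

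The first structural step is to exploit the explicit form of $\mathcal{P}_1$ in \eqref{eq.P1_sol}. Since $\mathcal{P}_1$ is the orthogonal projection onto the affine subspace $\mathcal{A} = \{Y : Y\mathbf{1} = \mathbf{1},\ Y^T\mathbf{1} = \mathbf{1}\}$, any component of the residual lying in $\mathcal{A}^{\perp}$ — i.e.\ any deviation from the row/column sum constraints — is annihilated in a single application. Only the tangential component of $\Delta X_k$ can persist, and on this component $\mathcal{P}_1$ acts as the identity. This reduces the problem to understanding how $\mathcal{P}_2$ treats small perturbations of iterates that already lie close to $\mathcal{A}$.

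Next I would couple this structural observation with the geometric convergence of SDSN already established earlier in the section. By the iteration-count theorem the exact iterates converge to $D_X^\theta$ at rate $c \in (0,1)$, and for finite $\theta$ the limit is a strictly positive doubly stochastic matrix (via the convergence corollary). Once the iterates are bounded away from the boundary of the nonnegative orthant, $\mathcal{P}_2$ acts locally as the identity on perturbations small enough not to cross zero; combined with the annihilating behaviour of $\mathcal{P}_1$ on $\mathcal{A}^\perp$, the map $T$ becomes a contraction on the tangential subspace with constant at most $c$. The error recursion then satisfies $\|\Delta X_{k+1}\|_F \leq c\,\|\Delta X_k\|_F + \|\eta_{k+1}\|_F$; since the computed iterates are uniformly bounded in Frobenius norm (they are near-doubly-stochastic), $\|\eta_k\|_F$ is uniformly of order unit roundoff. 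A standard geometric-series argument then gives $\|\Delta X_k\|_F$ decaying to the rounding floor, which is negligible relative to the iterates — the precise sense in which the residual "vanishes".

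The main obstacle will be the transitional regime: for small $k$ the iterates may sit near the boundary of the nonnegative orthant, so $\mathcal{P}_2$ actively truncates negative entries and may interact nontrivially with the residual. Making rigorous the claim that $\mathcal{P}_2$ eventually "becomes the identity on perturbations" requires a quantitative lower bound on the positive entries of $X_k$ relative to the truncation scale, coupling the strictly positive limit to the convergence rate $c$. Before that regime is reached, the $1$-Lipschitz (non-expansive) property of $\mathcal{P}_2$ in the Frobenius norm is what prevents early blow-up and carries the induction through to the contractive phase.
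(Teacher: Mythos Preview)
Your approach diverges from the paper's at the crucial step, and the divergence hides a genuine gap. You correctly observe that $\mathcal{P}_1$ kills the component of $\Delta X_k$ orthogonal to the tangent space $\mathcal{A}_0 = \{Y: Y\mathbf{1} = Y^T\mathbf{1} = 0\}$, leaving only the tangential part. But your subsequent claim --- that once $\mathcal{P}_2$ acts as the identity near a strictly positive limit, $T$ contracts the tangential residual with rate $c$ --- is false. If $\mathcal{P}_2$ is locally the identity, then $T = \mathcal{P}_2 \circ \mathcal{P}_1$ reduces to $\mathcal{P}_1$, whose linearisation on perturbations is exactly the orthogonal projection onto $\mathcal{A}_0$; on a residual already in $\mathcal{A}_0$ this is the \emph{identity}, not a contraction. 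The rate $c$ in the iteration-count theorem comes from the interplay between $\mathcal{P}_1$ and an \emph{active} $\mathcal{P}_2$; the moment $\mathcal{P}_2$ becomes inert there is no mechanism left to shrink a tangential residual, so in your own asymptotic regime the error would freeze rather than vanish. (Separately, nothing in the paper establishes that $D_X^\theta$ is strictly positive for finite $\theta$; the corollary you invoke concerns the $\theta\to\infty$ limit, which is a permutation and hence maximally \emph{non}-positive.)

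The paper exploits the same tangential-projection observation but draws the opposite conclusion. It introduces (via a dedicated lemma) the linear projection $\mathcal{P}_3$ onto $\mathcal{A}_0$ and verifies the exact decomposition $\mathcal{P}_1(\hat{X}_k + \Delta X_k) = \mathcal{P}_1(\hat{X}_k) + \mathcal{P}_3(\Delta X_k)$. The key move is then to read the residual's \emph{own} evolution as the alternating projection $\mathcal{P}_2 \mathcal{P}_3 \mathcal{P}_2 \mathcal{P}_3 \cdots (\Delta X_k)$; since the only matrix that is simultaneously nonnegative and has zero row and column sums is the zero matrix, von Neumann's successive-projection lemma forces this sequence to $0$. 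No strict-positivity hypothesis on the limit and no transitional-regime bookkeeping are needed: the damping comes precisely from $\mathcal{P}_2$ \emph{actively} clipping the residual at every step, which is the opposite of the regime you were aiming for.
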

Such a theorem ensures that both the gradient matrix computation in \eqref{iter.DSPFP} and SDSN can achieve computational efficiency in low-precision arithmetic while maintaining accuracy equivalent to high-precision implementations.


\section{Matching algorithm}
We propose an algorithm, Frobenius-Regularized Assignment Matching (FRAM), which approximates the QAP via a sequence of FRA, as formulated in \eqref{iter.DSPFP}. Each FRA is efficiently solved using the scalable SDSN solver. The overall procedure of FRAM is summarized in Algorithm \ref{ag.FRAM}.


        \textbf{Complexity} For $n =\tilde{n}$, Step 2-3 require $O(n^2)$ operations. Step 5 requires $O(n^3)$ operations per iteration regardless of fast and sparse matrix computation. Step 6 requires $O({n^2})$ operations. Step 10 transforms the doubly stochastic matrix $N$ back to a matching matrix $M$ using the Hungarian method \cite{kuhn1955hungarian}, which requires $O(n^3)$ operations in the worst-case scenario. However, the practical cost is far less than this, as $N$ is usually sparse. In conclusion, this algorithm has time complexity $O(n^3)$ per iteration and space complexity $O(n^2)$. 

        \textbf{Mixed-precision architecture in GPU.} The numerical precision specifications for each algorithmic operation are documented in inline code annotations. Steps 2-3 perform matrix scaling operations to precondition the input data, enabling substantial computational acceleration in Steps 5 and 6 through low-precision arithmetic implementation. The subsequent steps are conducted in double precision to compensate for accuracy degradation. Further implementation details are provided in the appendix. 

\noindent
\begin{minipage}[t]{0.52\textwidth}
    \begin{algorithm}[H]
        \caption{Frobenius-Regularized Assignment Matching (FRAM)}
        \begin{algorithmic}[1] 
            \Require $A,\tilde{A},K,\lambda,\alpha,\theta,\delta_{th}$
            \State Initial $X^{(0)} = \mathbf{0}_{n \times \tilde{n}}$
            \State {$c=\max(A,{\tilde{A}}, K)$} \hfill \Comment{FP64}
            \State {$A = A/ \sqrt{c},\ \tilde{A} =  \tilde{A}/ \sqrt{c},\  K = K/c$}\hfill \Comment{FP64}
            \While{$\delta^{(t)} > \delta_{th}$}
                \State $X^{(t)} = AN^{(t-1)}\tilde{A}+\lambda K$ \hfill \Comment{TF32}
                \State $D^{(t)}= \text{SDSN}(X^{(t)},\theta)$ \hfill \Comment{FP32}
                \State $N^{(t)} =  (1-\alpha) N^{(t-1)} + \alpha D^{(t)}$ \hfill \Comment{FP64}
                \State$\delta^{(t)} = \|N^{(t)} - N^{(t-1)}\|_{F}/\|N^{(t)}\|_{F}$ \hfill \Comment{FP64}
            \EndWhile
            \State $Discretize$ $N$ $to$ $obtain$ $M$ \hfill \Comment{FP64}
            \State \Return Matching matrix $M$
        \end{algorithmic}
        \label{ag.FRAM}
    \end{algorithm}
\end{minipage}
\hfill
\begin{minipage}[t]{0.44\textwidth}
    \begin{algorithm}[H]
        \caption{Scaling Doubly Stochastic Normalization (SDSN)}
        \begin{algorithmic}[1]
            \Require Matrix $ X $, $\theta, \gamma_{th}$
            \State $X^{(0)} = \frac{\theta}{2} X/\max(X)$ 
            \While{$\gamma^{(k)} > \gamma_{th}$}
                \State $\bar{X}^{(k)} = \frac{\mathbf{1}^T X \mathbf{1}}{n^2}$
                \State $X_1^{(k)}=\left( \frac{\mathbf{I}}{n} + \bar{X}^{(k)} I - \frac{X^{(k)}}{n} \right) \mathbf{1}\mathbf{1}^T$
                \State $\tilde{X}^{(k+1)} = X^{(k)} + X_1^{(k)} - \frac{\mathbf{1} \mathbf{1}^T X^{(k)}}{n}$
                \State $X^{(k+1)} = \max(0, \tilde{X}^{(k+1)})$
                \State $\gamma^{(k)} = n\bar{X}^{(k)} -1$
            \EndWhile
            \State \textbf{Output:} Doubly stochastic matrix $ X $
        \end{algorithmic}
        \label{alg.1}
    \end{algorithm}
\end{minipage}


\section{Experiments}
We evaluate the proposed algorithm FRAM and other contributions from the following aspects:
\begin{itemize}
    \item Q1. Compared to baseline methods, what advancements does FRAM offer in attributed graph matching tasks?
    \item Q2. How robust is FRAM in attribute-free graph matching tasks?
    \item Q3. How does mixed-precision architecture accelerate FRAM?
\end{itemize}
\textbf{Setting.} For FRAM, we set $\theta = 2$ for dense graph matching tasks and $\theta = 10$ for sparse graph matching tasks. Concerning the regularization parameter $\lambda$, according to \cite{lu2016fast}, the result is not sensitive to $\lambda$. For simplicity, we always use $\lambda =1$ in this paper.  We configure the $\alpha$ as 0.95 to align with the parameter settings in DSPFP \cite{lu2016fast}. All algorithmic comparison experiments are conducted using Python 3 on an Intel Core i7 2.80 GHz PC. All numerical computations are conducted in double precision (FP64) to ensure numerical stability, particularly for algorithms like ASM and GA that involve exponential operations and are sensitive to floating-point precision. For the evaluation of mixed-precision architecture, we utilize a dedicated hardware platform equipped with an Intel Core i9-14900 3.20 GHz CPU and an NVIDIA RTX 4080 SUPER GPU.

\textbf{Criteria.} For attributed graph matching tasks, we evaluate the accuracy of algorithms by

\begin{equation}
    \frac{1}{2}\left\|A-M \widetilde{A} M^{T}\right\|_{F}+\left\|F-M \tilde{F}\right\|_{F}.
    \label{eq:matching error}
\end{equation}
This formulation is mathematically equivalent to the original objective function \eqref{eq.Object}, but offers a more intuitive interpretation.
For graphs with only edge attribute matrices, the measurement only contains the first term of \eqref{eq:matching error}. For attribute-free graph matching tasks, the measurement is $\frac{n_c}{n}$ where $n_c$ represents the number of correct matching nodes.

\textbf{Baselines} include project fixed-point algorithms such as DSPFP \cite{lu2016fast} and AIPFP \cite{leordeanu2009integer,lu2016fast}; softassign-based algorithms such as GA \cite{gold1996graduated} (based on \eqref{eq.Object}) and ASM \cite{shen2024adaptive}; optimal transport methods such as GWL \cite{xu2019gromov} and S-GWL \cite{xu2019scalable}; and a spectral-based algorithm, GRASP \cite{hermanns2023grasp}. Optimal transport methods and GRASP are designed for attribute-free graph matching tasks. Many state-of-the-art algorithms, including Path Following \cite{zaslavskiy2008path}, FGM \cite{zhou2015factorized}, RRWM \cite{2010Reweighted}, PM \cite{egozi2012probabilistic}, BGM \cite{cour2006balanced}, and MPM \cite{cho2014finding}, do not scale well to large graphs (e.g., with over 1000 nodes), and thus are not included in our large-scale graph matching comparisons. The procedure for constructing graphs from images (Sections 7.1–7.2) is summarized in the appendix.

\begin{table}[http]
\centering
\label{tab:dataset}
\resizebox{\columnwidth}{!}{%
\begin{tabular}{lcccccc}
\hline
\textbf{Dataset} &
  $|V|$ &
  $|E|$ &
  \textbf{Attributed nodes} &
  \multicolumn{1}{l}{\textbf{Attributed edges}} &
  \textbf{Ground-truth} &
  \textbf{Dense graphs} \\ \hline
Real-world pictures &
  (700,1000) &
  (244 650, 499 500) &
  \textcolor{cyan}{\ding{51}} &
  \textcolor{cyan}{\ding{51}} &
  \textcolor{orange}{\ding{55}} &
  \textcolor{cyan}{\ding{51}} \\
CMU House &
  (600,800) &
  (179 700, 319 600) &
  \textcolor{orange}{\ding{55}} &
  \textcolor{cyan}{\ding{51}} &
  \textcolor{orange}{\ding{55}} &
  \textcolor{cyan}{\ding{51}} \\
Facebook-ego &
  4 039 &
  88 234 &
  \textcolor{orange}{\ding{55}} &
  \textcolor{orange}{\ding{55}} &
  \textcolor{cyan}{\ding{51}} &
  \textcolor{orange}{\ding{55}} \\
 \hline
\end{tabular}}
\caption{Datasets. $|V|$ is the number of nodes and  $|E|$ is the number of edges. ( , ) represents a range.}
\end{table}

\subsection{Real-world pictures}
In this set of experiments, the attributed graphs are constructed from a public dataset\footnote{\url{http://www.robots.ox.ac.uk/~vgg/research/affine/}}. This dataset, which contains eight sets of pictures, covers five common picture transformations: viewpoint changes, scale changes, image blur, JPEG compression, and illumination. 

The numerical results are presented in Table \ref{tab:real_error}. As a revolutionary version of DSPFP, FRAM achieves significant acceleration across all image sets. The average runtime of FRAM is 2.3s, compared to 6.5s for DSPFP, yielding an overall speedup of 2.8×. In addition to acceleration, FRAM consistently outperforms DSPFP regarding matching accuracy, demonstrating the effectiveness of the algorithmic design. Overall, FRAM achieves the best matching performance in more than half of the experiments while being nearly twice as fast as the second-fastest method, ASM.



\begin{table*}[htbp]
\centering
\begin{subtable}[t]{0.51\textwidth}
\centering
\resizebox{\linewidth}{!}{%
\begin{tabular}{c|clclclclclclclcl}
\hline
Performance &
  \multicolumn{16}{c}{Running Time} \\ \hline
Image Set &
  \multicolumn{2}{c|}{bark} &
  \multicolumn{2}{c|}{boat} &
  \multicolumn{2}{c|}{graf} &
  \multicolumn{2}{c|}{wall} &
  \multicolumn{2}{c|}{leuv} &
  \multicolumn{2}{c|}{tree} &
  \multicolumn{2}{c|}{ubc} &
  \multicolumn{2}{c}{bikes} \\ \hline
DSPFP & \multicolumn{2}{c|}{9.1s} & \multicolumn{2}{c|}{7.3s} & \multicolumn{2}{c|}{8.8s} & \multicolumn{2}{c|}{6.1s} & \multicolumn{2}{c|}{5.8s} & \multicolumn{2}{c|}{7.6s} & \multicolumn{2}{c|}{6.1s} & \multicolumn{2}{c}{3.3s} \\
AIPFP & \multicolumn{2}{c|}{44.3s} & \multicolumn{2}{c|}{44.2s} & \multicolumn{2}{c|}{84.4s} & \multicolumn{2}{c|}{44.8s} & \multicolumn{2}{c|}{26.3s} & \multicolumn{2}{c|}{40.3s} & \multicolumn{2}{c|}{34.3s} & \multicolumn{2}{c}{16.3s} \\
GA & \multicolumn{2}{c|}{30.8s} & \multicolumn{2}{c|}{31.0s} & \multicolumn{2}{c|}{34.2s} & \multicolumn{2}{c|}{29.7s} & \multicolumn{2}{c|}{30.8s} & \multicolumn{2}{c|}{29.8s} & \multicolumn{2}{c|}{31.8s} & \multicolumn{2}{c}{16.8s} \\
ASM & \multicolumn{2}{c|}{4.5s} & \multicolumn{2}{c|}{4.5s} & \multicolumn{2}{c|}{4.2s} & \multicolumn{2}{c|}{5s} & \multicolumn{2}{c|}{5s} & \multicolumn{2}{c|}{3.8s} & \multicolumn{2}{c|}{4.5s} & \multicolumn{2}{c}{3.2s} \\
FRAM & \multicolumn{2}{c|}{\textbf{2.6}s} & \multicolumn{2}{c|}{\textbf{2.1}s} & \multicolumn{2}{c|}{\textbf{2.4}s} & \multicolumn{2}{c|}{\textbf{2.3}s} & \multicolumn{2}{c|}{\textbf{2.2}s} & \multicolumn{2}{c|}{\textbf{2.5}s} & \multicolumn{2}{c|}{\textbf{2.4}s} & \multicolumn{2}{c}{\textbf{1.1}s} \\
\hline
\end{tabular}
}
\end{subtable}
\hfill
\begin{subtable}[t]{0.45\textwidth}
\centering
\resizebox{\linewidth}{!}{%
\begin{tabular}{c|clclclclclclclcl}
\hline
Performance &
  \multicolumn{16}{c}{Matching Error ($\times 10^4$)} \\ \hline
Image Set &
  \multicolumn{2}{c|}{bark} &
  \multicolumn{2}{c|}{boat} &
  \multicolumn{2}{c|}{graf} &
  \multicolumn{2}{c|}{wall} &
  \multicolumn{2}{c|}{leuv} &
  \multicolumn{2}{c|}{tree} &
  \multicolumn{2}{c|}{ubc} &
  \multicolumn{2}{c}{bikes} \\ \hline
DSPFP & \multicolumn{2}{c|}{5.0} & \multicolumn{2}{c|}{4.4} & \multicolumn{2}{c|}{5.1} & \multicolumn{2}{c|}{4.1} & \multicolumn{2}{c|}{5.0} & \multicolumn{2}{c|}{4.7} & \multicolumn{2}{c|}{4.0} & \multicolumn{2}{c}{4.9} \\
AIPFP & \multicolumn{2}{c|}{4.6} & \multicolumn{2}{c|}{4.5} & \multicolumn{2}{c|}{5.3} & \multicolumn{2}{c|}{4.4} & \multicolumn{2}{c|}{3.9} & \multicolumn{2}{c|}{4.7} & \multicolumn{2}{c|}{3.6} & \multicolumn{2}{c}{4.3} \\
GA & \multicolumn{2}{c|}{4.9} & \multicolumn{2}{c|}{5.3} & \multicolumn{2}{c|}{6.4} & \multicolumn{2}{c|}{6.6} & \multicolumn{2}{c|}{4.2} & \multicolumn{2}{c|}{6.3} & \multicolumn{2}{c|}{3.4} & \multicolumn{2}{c}{4.6} \\
ASM & \multicolumn{2}{c|}{4.6} & \multicolumn{2}{c|}{4.4} & \multicolumn{2}{c|}{4.9} & \multicolumn{2}{c|}{4.2} & \multicolumn{2}{c|}{\textbf{3.7}} & \multicolumn{2}{c|}{\textbf{3.5}} & \multicolumn{2}{c|}{3.3} & \multicolumn{2}{c}{\textbf{3.6}} \\
FRAM & \multicolumn{2}{c|}{\textbf{4.2}} & \multicolumn{2}{c|}{\textbf{4.0}} & \multicolumn{2}{c|}{\textbf{4.6}} & \multicolumn{2}{c|}{\textbf{3.6}} & \multicolumn{2}{c|}{4.9} & \multicolumn{2}{c|}{3.8} & \multicolumn{2}{c|}{\textbf{3.1}} & \multicolumn{2}{c}{4.4} \\
\hline
\end{tabular}
}
\end{subtable}
\caption{Performance comparison in terms of (a) running time and (b) matching error on different image sets. The number of nodes is set to 1000 (\textit{bike} set with 700 nodes). All algorithms are evaluated using double precision (FP64).}
\label{tab:real_error}
\end{table*}
\subsection{House sequence}

CMU House Sequence is a classic benchmark dataset. It consists of a sequence of images showing a toy house captured from different viewpoints. Table \ref{tab:house} illustrates that FRAM achieves the bestperformance in both speed and accuracy on the House Sequence dataset. It runs 4.1× faster than DSPFP and 3.4× faster than ASM, while attaining the lowest matching error, outperforming DSPFP by 10.1\%. These results highlight FRAM’s efficiency and effectiveness.

\begin{table}[http]
\centering
\resizebox{0.55\textwidth}{!}{%
\begin{tabular}{lcccccc}
\hline
               & DSPFP & AIPFP  & ASM   & GA     & FRAM  \\ \hline
Running time   & 3.83s & 16.75s & 3.16s & 11.86s & \textbf{0.93s}   \\
Matching error & 8117  & 9142   & 7341  & 8004   & \textbf{7047}  \\ \hline
\end{tabular}%
}
\caption{Comparisons between algorithms on graphs from the house sequence. All algorithms are evaluated using double precision (FP64).}
\label{tab:house}
\end{table}

\subsection{Social Networks}
The social network comprising `circles' (or `friends lists') from Facebook \cite{snapnets} contains 4039 users (nodes) and 88234 relations (edges). We compare different methods in matching networkswith noisy versions $5\%$, $15\%$ and $25\%$. Table \ref{tab:facebook} shows that FRAM achieves the highest node accuracy across all noise levels while maintaining computational efficiency. FRAM achieves 4\% higher accuracy than ASM (the second-best method) while operating at 2× faster computational speed. Although FRAM is slightly slower than DSPFP, it offers a substantial 15\% improvement in accuracy, demonstrating a favorable trade-off between precision and efficiency.

\begin{table}[ht]
\centering
\resizebox{0.55\textwidth}{!}{%
\begin{tabular}{c|ccccccc}
\hline
Social network & \multicolumn{2}{c}{5\% noise} & \multicolumn{2}{c}{15\% noise} & \multicolumn{2}{c}{25\% noise} \\ \hline
Methods & Acc & Time & Acc & Time & Acc & Time \\ \hline
S-GWL & 26.4\% & 1204.1s & 18.3\% & 1268.2s & 17.9\% & 1295.8s \\
GWL & 78.1\% & 3721.6s & 68.4\% & 4271.3s & 60.8\% & 4453.9s \\
DSPFP & 79.7\% & 151.3s & 68.3\% & 154.2s & 62.2\% & 156.9s \\
GA & 35.5\% & 793.2s & 21.4\% & 761.7s & 16.0\% & 832.6s \\
GRASP & 37.9\% & \textbf{63.6s} & 20.3\% & \textbf{67.4s} & 15.7\% & \textbf{71.3s} \\
ASM & 91.1\% & 387.2s & 88.4\% & 391.7s & 85.7\% & 393.1s \\
AIPFP & 68.6\% & 2705.5s & 55.1\% & 2552.7s & 47.8\% & 2513.8s \\
FRAM & \textbf{94.7\%} & 211.1s & \textbf{91.1\%} & 221.6s & \textbf{89.5\%} & 222.9s \\
\hline
\end{tabular}}
\caption{Comparisons on the Facebook networks. All algorithms are evaluated using double precision (FP64).}\label{tab:facebook}
\end{table}

\subsection{Mixed-precision acceleration}
This subsection analyzes the acceleration performance of mixed-precision architecture in FRAM across varying tasks. As demonstrated in Figure \ref{fig:mix}, the architecture shows markedly higher acceleration ratios for large-scale problems. Specifically, in the ubc(2000) matching task, mixed-precision architecture enables: (a) 12.7× speedup over standard GPU-FP64 implementations, (b) 371.4× acceleration compared to CPU-FP64 baselines. Conversely, tasks with at most 1,000 nodes exhibit sub-4× speed gains, likely attributed to limited scale that prevents full utilization of the hardware's capabilities. These observations align with Amdahl's law: fixed computational overheads dominate runtime at small scales, significantly reducing achievable performance improvements.

\begin{figure}[http]
    \centering
    \includegraphics[width=0.5\linewidth]{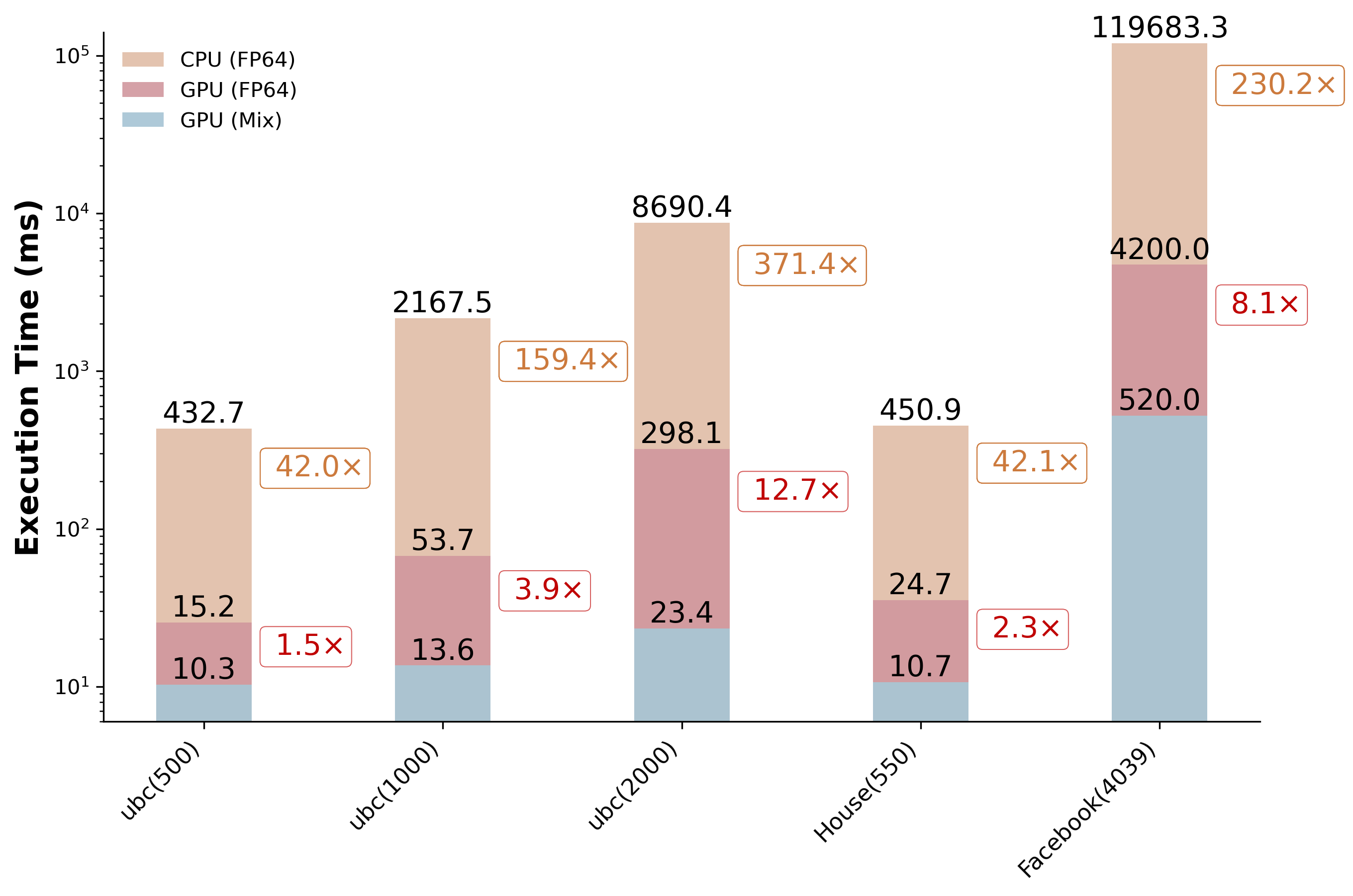}
\caption{Acceleration performance of FRAM’s mixed-precision architecture. \texttt{ubc(2000)} indicates matching of 2000-node graphs from the UBC image set.}
    \label{fig:mix}
\end{figure}



\section{Conclusion}
\label{sec:conclusion}
In the context of graph matching, this paper explores the bias introduced by projection-based relaxations. To mitigate this issue, we reformulate the projection step as a regularized linear assignment problem, providing a principled way to control the relaxation error. Building on this formulation, we propose a robust algorithm that demonstrates competitive accuracy while offering substantial speed advantages over existing baselines, including a significant improvement over the second-best method. On the computational side, we propose a theoretically grounded mixed-precision architecture. To the best of our knowledge, this is the first such design in the graph matching domain. It achieves significant acceleration while preserving numerical stability.

A limitation of this study lies in the empirical choice of the parameter $\theta$, so we plan to develop an adaptive parameter selection strategy in future work. While our framework validates the effectiveness of mixed-precision computation, its computational efficiency remains improvable. Future work may explore low-level compilation techniques to further optimize the implementation and unlock additional speed gains.

\clearpage
\bibliographystyle{plainnat}
\bibliography{ref.bib}
\clearpage
\appendix

\clearpage
\setcounter{lemma}{0} 
\setcounter{theorem}{0}
\setcounter{corollary}{0}
\section{Notations}
\label{sec:appendix_section}
The common symbols are summarized in Table \ref{tab:symbols}.
\begin{table}[h]
\centering
\caption{Symbols and Notations.}
\resizebox{0.6\columnwidth}{!}{
\begin{tabular}{cc}
\hline
Symbol                       & Definition                         \\ \hline
${G},\tilde{{G}}$                 & matching graphs                   \\
$A,\tilde{A}$                          & edge attribute matrices of  ${G}$ and $\tilde{{G}}$                \\
$F,\tilde{F}$                          & node attribute matrices of  ${G}$ and $\tilde{{G}}$          \\
$n,\tilde{n}$                          & number of nodes of  ${G}$ and $\tilde{{G}}$                  \\
$M$                          & matching matrix                   \\
$\Pi_{n \times n}$           & the set of $n \times n$ permutation matrices       \\
$\mathcal{D}_{n \times n}$        & the set of $n \times n$ doubly stochastic matrices \\\hline
$\mathbf{1}$,$ \mathbf{0}$   & a column vector of all 1s,0s      \\ 
$\operatorname{tr}(\cdot)$                  & trace                             \\
$\langle \cdot,\cdot\rangle$ & Frobenius inner product                     \\
$\|\cdot \|_{F}$                    &Frobenius norm                         \\
$\theta$                      & the parameter in FRA      \\
$\alpha$                      & the step size parameter      
\\  \hline
FP8/16/32/64  & 8/16/32/64-bit Floating Point
\\
TF32   & TensorFloat 32\\
BF16    &16-bit Brain Floating Point
 \\\hline
\end{tabular}%
}
\end{table}
\label{tab:symbols}

\section{Convergence Criterions}
\subsection{Convergence Criterion of SDSN}
We recall the SDSN projection steps below:
$$
\tilde{X}^{(k)} =\mathcal{P}_1 \left( X^{(k-1)} \right), \ \  X^{(k)} =\mathcal{P}_2 \left(\tilde{X}^{(k)}  \right),
$$
$$
\mathcal{P}_1(X) =X + \left( \frac{I}{n} + \frac{\mathbf{1}^T X \mathbf{1}}{n^2}I - \frac{\mathbf{X}}{n} \right) \mathbf{1}\mathbf{1}^T - \frac{\mathbf{\mathbf{1} \mathbf{1}^T X}}{n} , \quad \quad \mathcal{P}_2(X) = \frac{X + |X|}{2}.
$$
Before the non-negative projection, the updated matrix satisfies
\begin{equation}
\sum_{i,j} \tilde{X}^{(k)}_{ij} =\sum_{\tilde{X}^{(k)}_{ij}>0} \tilde{X}^{(k)}_{ij}+\sum_{\tilde{X}^{(k)}_{ij}<0} \tilde{X}^{(k)}_{ij}= n.
\end{equation}
After applying the non-negative projection (all negative elements are set to 0), we obtain
\begin{align}
    \sum_{i,j} X^{(k)}_{ij} = 
    \sum_{\tilde{X}^{(k)}_{ij}>0} \tilde{X}^{(k)}_{ij} =  n+ \sum_{\tilde{X}^{(k)}_{ij}<0} |\tilde{X}^{(k)}_{ij}|.
\end{align}
Since $ \sum_{i,j} X^{(k)}_{ij}$ converges to $n$, 
\begin{equation}
\sum_{\tilde{X}^{(k)}_{ij}<0} |\tilde{X}^{(k)}_{ij}| = \sum_{i,j} X^{(k)}_{ij} - n
\end{equation}
is a natural measure for the residual.
Moreover, the von Neumann successive projection lemma \cite{vonNeumann1932FunctionalOperatorsII} guarantees that this distance decreases monotonically over successive iterations. Therefore, we define a \textbf{dimension-invariant distance measure} as
\begin{equation}
    \gamma(X^{(k)}) = \frac{1}{n}\sum_{i,j} X^{(k)}_{ij} - 1 = \frac{\mathbf{1}^T X^{(k)} \mathbf{1}}{n} - 1,
\end{equation}
where normalization by \( n \) ensures applicability to matrices of any size. Furthermore, since \(\mathbf{1}^T X^{(k-1)} \mathbf{1}\) is computed in $\mathcal{P}_1(\cdot)$, we can avoid the computation for the distance by approximately using $\gamma(X^{(k-1)})$ in $k$th iteration.
\subsection{Convergence Criterion of the Matching algorithm}
The matching algorithm in DSPFP \cite{lu2016fast} adopts the following original convergence criterion:
\begin{equation}
    \max\left(\left|\frac{N^{(t)}}{\max(N^{(t)})} - \frac{N^{(t-1)}}{\max(N^{(t-1)})}\right|\right),
\end{equation} 
which was also adopted by the ASM \cite{shen2024adaptive}. This metric primarily captures local fluctuations between the successive iterations. To better characterize the global structural evolution, we adopt the following \textbf{normalized Frobenius criterion}:
\begin{equation}
    \delta^{(t)} = \frac{\|N^{(t)} - N^{(t-1)}\|_{F}}{\|N^{(t)}\|_{F}}.
\end{equation}
This criterion demonstrates enhanced suitability for our framework due to two principal considerations: (1) The Frobenius norm inherently aligns with the FRA and the objective formulation, ensuring mathematical consistency throughout the optimization process. (2) The normalization scheme provides a scale-invariant measurement of the variation of the solution trajectory.

\section{Proofs in Section 4}
\begin{theorem}
The solution of the doubly stochastic projection with a scaling parameter $\theta$ 
\begin{equation}
D_X^\theta = \arg\min_{D \in \mathcal{D}_{n \times n}} \|D - \frac{\theta}{2}X\|_{F}^2
\end{equation}
is the solution of a Frobenius-regularized linear assignment problem
\begin{equation}
\begin{aligned}
         D_X^\theta = \arg\max_{D \in \mathcal{D}_{n \times n}}  \Gamma^{\theta}(X)
     ,\quad \Gamma^{\theta}(X) = \langle D, X \rangle -\frac{1}{\theta}\langle D, D \rangle.
\end{aligned}
\end{equation}
\end{theorem}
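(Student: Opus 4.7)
The plan is to mirror the derivation the authors already carried out for a generic scalar $w$ in equations prior to the theorem, but now with $w$ specialized to $\theta/2$. In particular, the key is simply to expand the squared Frobenius norm, identify which terms depend on $D$, and rescale to match the stated regularized objective.

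First, I would expand
\begin{equation*}
\bigl\|D - \tfrac{\theta}{2}X\bigr\|_F^2 \;=\; \langle D, D\rangle \;-\; \theta\,\langle D, X\rangle \;+\; \tfrac{\theta^2}{4}\langle X, X\rangle,
\end{equation*}
using the definition of the Frobenius inner product and bilinearity. Next, I would observe that the third term $\tfrac{\theta^2}{4}\langle X, X\rangle$ does not involve $D$, so it can be discarded from the argmin without changing the optimizer. This leaves
\begin{equation*}
D_X^\theta \;=\; \arg\min_{D\in\mathcal{D}_{n\times n}} \Bigl(\langle D, D\rangle - \theta\,\langle D, X\rangle\Bigr).
\end{equation*}

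Then, since $\theta > 0$, dividing the objective by $\theta$ preserves the argmin, yielding the equivalent minimization of $\tfrac{1}{\theta}\langle D,D\rangle - \langle D,X\rangle$. Flipping the sign converts this to a maximization problem, giving exactly
\begin{equation*}
D_X^\theta \;=\; \arg\max_{D\in\mathcal{D}_{n\times n}} \Bigl(\langle D, X\rangle - \tfrac{1}{\theta}\langle D, D\rangle\Bigr) \;=\; \arg\max_{D\in\mathcal{D}_{n\times n}} \Gamma^{\theta}(X),
\end{equation*}
which is the claimed equivalence.

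I do not anticipate any real obstacle here; the theorem is essentially a restatement in the notation of the paper of the chain of equalities already displayed for the scalar $w$, instantiated at $w=\theta/2$. The only mild subtlety is being explicit that $\theta > 0$ (implicit in the paper's usage, since $\theta$ controls the weight of an assignment score) so that dividing by $\theta$ and flipping signs preserves the argmin/argmax correspondence. Beyond this, the proof is purely algebraic manipulation of inner products.
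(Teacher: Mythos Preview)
Your proposal is correct and follows essentially the same approach as the paper's own proof: expand the squared Frobenius norm, drop the $D$-independent term, and convert the resulting minimization into the stated maximization. Your explicit mention that $\theta>0$ is needed to justify dividing by $\theta$ and flipping the sign is a minor clarification the paper leaves implicit, but otherwise the arguments are identical.
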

\begin{proof}
The squared Frobenius norm can be expanded as:
\begin{equation}
\|D - \frac{\theta}{2}X\|_{F}^2 = \langle D, D \rangle - 2\langle D, \frac{\theta}{2}X \rangle + \langle \frac{\theta}{2}X, \frac{\theta}{2}X \rangle.
\end{equation}
Since the term \(\langle \frac{\theta}{2}X, \frac{\theta}{2}X \rangle\) is a constant with respect to \( D \), it does not affect the optimization. Thus, the objective simplifies to:
\begin{equation}
D_X^\theta = \arg\min_{D \in \mathcal{D}_{n \times n}} \langle D, D \rangle - \theta \langle D, X \rangle.
\end{equation}
As a result, the problem reduces to:
\begin{equation}
D_X^\theta = \arg\max_{D \in \mathcal{D}_{n \times n}} \langle D, X \rangle -\frac{1}{\theta}\langle D, D \rangle.
\end{equation}
\end{proof}

\begin{theorem}
For a nonnegative matrix $X \in \mathbb{R}^{n \times n}$, the following inequality holds:
    \begin{equation}
\epsilon^{\theta}_{X}\leq  \frac{1}{\theta}. 
\end{equation}
\end{theorem}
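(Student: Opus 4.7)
The plan is to exploit the optimality of $D^\theta_X$ for the regularized problem and then control the Frobenius norms of doubly stochastic matrices, which are uniformly bounded regardless of $X$.

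First, I would recall from the preceding theorem that
\[
D^\theta_X = \arg\max_{D \in \mathcal{D}_{n\times n}} \Big( \langle D, X \rangle - \tfrac{1}{\theta}\langle D, D \rangle \Big),
\]
and that $D^\infty_X$ is a feasible point of the same constraint set. The key step is therefore the optimality inequality
\[
\langle D^\theta_X, X \rangle - \tfrac{1}{\theta}\langle D^\theta_X, D^\theta_X \rangle \;\geq\; \langle D^\infty_X, X \rangle - \tfrac{1}{\theta}\langle D^\infty_X, D^\infty_X \rangle,
\]
which upon rearrangement gives
\[
\langle D^\infty_X, X \rangle - \langle D^\theta_X, X \rangle \;\leq\; \tfrac{1}{\theta}\bigl( \langle D^\infty_X, D^\infty_X \rangle - \langle D^\theta_X, D^\theta_X \rangle \bigr).
\]
Note that the left-hand side is already nonnegative because $D^\infty_X$ is a maximizer of the unregularized linear functional over $\mathcal{D}_{n\times n}$, which is consistent with $\epsilon^\theta_X \geq 0$.

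Next, I would bound the Frobenius-norm difference purely using the geometry of $\mathcal{D}_{n\times n}$, independently of $X$. Since $\mathcal{D}_{n\times n}$ is the convex hull of permutation matrices, any $D \in \mathcal{D}_{n\times n}$ satisfies $\|D\|_F^2 \leq n$ (attained at permutations), and at the other extreme $\|D\|_F^2 \geq 1$ (attained at $\frac{\mathbf{1}\mathbf{1}^T}{n}$, the minimizer of the squared Frobenius norm on the Birkhoff polytope, by Cauchy--Schwarz or direct computation from the row-stochastic condition). Consequently,
\[
\langle D^\infty_X, D^\infty_X \rangle - \langle D^\theta_X, D^\theta_X \rangle \;\leq\; n - 1.
\]

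Combining the two bounds and dividing by $n$ yields
\[
\epsilon^\theta_X = \tfrac{1}{n}\bigl( \langle D^\infty_X, X \rangle - \langle D^\theta_X, X \rangle \bigr) \;\leq\; \frac{n-1}{n\,\theta} \;\leq\; \frac{1}{\theta},
\]
which is the claim. The only delicate point worth noting is the lower bound $\|D^\theta_X\|_F^2 \geq 1$ for doubly stochastic $D^\theta_X$; this is the one place where the convex-hull structure of $\mathcal{D}_{n\times n}$ does real work, and it can be justified in one line via $n = \mathbf{1}^\top D^\theta_X \mathbf{1} = \langle \mathbf{1}\mathbf{1}^\top, D^\theta_X \rangle \leq \|\mathbf{1}\mathbf{1}^\top\|_F \|D^\theta_X\|_F = n \|D^\theta_X\|_F$. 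The nonnegativity hypothesis on $X$ is not strictly needed for the inequality itself; the argument is driven entirely by optimality and by the diameter of $\mathcal{D}_{n\times n}$ in Frobenius norm.
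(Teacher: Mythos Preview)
Your proposal is correct and follows essentially the same route as the paper: optimality of $D^\theta_X$ for the regularized objective, rearrangement, and the bounds $1 \leq \|D\|_F^2 \leq n$ on the Birkhoff polytope to get $\epsilon^\theta_X \leq (n-1)/(n\theta) \leq 1/\theta$. Your Cauchy--Schwarz justification of the lower bound and your remark that nonnegativity of $X$ is unnecessary are both accurate additions.
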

\begin{proof}
Since $D^\theta_X$ is the maximizer of $\Gamma^\theta(X)$, we have
\begin{equation}
\Gamma^\theta(D^\theta_X) \geq \Gamma^\theta(D^\infty_X),
\end{equation}
where $D^\infty_X$ is a maximizer of the unperturbed problem $\max_{D \in \mathcal{D}_{n\times n}} \langle D, X \rangle$.

Expanding the inequality gives:
\begin{equation}
\langle D^\theta_X, X \rangle - \frac{1}{\theta}\langle D^\theta_X, D^\theta_X \rangle \geq \langle D^\infty_X, X \rangle - \frac{1}{\theta}\langle D^\infty_X, D^\infty_X \rangle.
\end{equation}
Rearranging terms, we have:
\begin{equation}
\langle D^\theta_X, X \rangle - \langle D^\infty_X, X \rangle \geq \frac{1}{\theta}\bigl(\langle D^\theta_X, D^\theta_X \rangle - \langle D^\infty_X, D^\infty_X \rangle\bigr).
\end{equation}
In particular, the maximum of $\langle D^\infty_X, D^\infty_X \rangle $ is $n$ when $D^\infty_X$ is a permutation matrix. Therefore,  we have $\langle D^\infty_X, D^\infty_X \rangle \leq n$ for general cases.
\begin{equation}
\langle D^\theta_X, X \rangle - \langle D^\infty_X, X \rangle \geq \frac{1}{\theta}(\langle D^\theta_X, D^\theta_X \rangle - n).
\end{equation}

Taking the negative of both sides,
\begin{equation}
\langle D^\infty_X, X \rangle - \langle D^\theta_X, X \rangle \leq \frac{1}{\theta}(n - \langle D^\theta_X, D^\theta_X \rangle).
\end{equation}
Note that $D^\theta_X \in \mathcal{D}_{n \times n}$ is doubly stochastic, so $1 \leq \langle D^\theta_X, D^\theta_X \rangle \leq n$ (The inner product achieves its maximum value when $D^\theta_X$ is a permutation matrix, while attaining its minimum value under the condition that all elements of $D^\theta_X$ are $\frac{1}{n}$). Therefore,
\begin{equation}
\langle D^\infty_X, X \rangle - \langle D^\theta_X, X \rangle \leq \frac{n - 1}{\theta},
\end{equation}
which completes the proof.
\end{proof}

\begin{theorem}
    As $\theta \to \infty$, the matrix $D_X^\theta$ converges to the unique matrix $D^*$. \( D^* \) minimizes the regularization term within the set \( \mathcal{F} \), the convex hull of the optimal permutation matrices.
\end{theorem}
\begin{proof}
We prove the theorem in two steps: (1) $D^{\infty}_X$ lies on the face $\mathcal{F}$, and (2) $D^{\infty}_X$ must equal $D^*$.

\textbf{Step 1.} Since $D_X^{\theta}$ is defined as the maximizer of 
\begin{equation}
\Gamma^\theta(D) = \langle D, X \rangle - \frac{1}{\theta}\langle D, D \rangle,
\end{equation}
we have
\begin{equation}
\Gamma^\theta(D_X^\theta) \geq \Gamma^\theta(D^{*}),
\end{equation}
which implies
\begin{equation}
\langle D_X^\theta, X \rangle - \frac{1}{\theta}\langle D_X^\theta, D_X^\theta \rangle \ge \langle D^*, X \rangle - \frac{1}{\theta}\langle D^*, D^* \rangle.
\end{equation}
As $\theta \to \infty$, we have 
\begin{equation}
\frac{1}{\theta}\langle D_X^\theta, D_X^\theta \rangle \to 0 \quad \text{and} \quad \frac{1}{\theta}\langle D^*, D^* \rangle \to 0.
\end{equation}
Taking the limit, we get
\begin{equation}
\lim_{\theta \to \infty} \langle D_X^\theta, X \rangle \ge \langle D^{*}, X \rangle.
\end{equation}
Since $D^{*}$ is the optimal solution to the linear assignment problem, for any $D \in \mathcal{D}_{n \times n}$,
\begin{equation}
\langle D, X \rangle \le \langle D^{*}, X \rangle.
\end{equation}
In particular,
\begin{equation}
\langle D_X^\theta, X \rangle \le \langle D^{*}, X \rangle.
\end{equation}
Combining both inequalities, we obtain
\begin{equation}
\lim_{\theta \to \infty} \langle D_X^\theta, X \rangle = \langle D^{*}, X \rangle,
\end{equation}
which means $D^{\infty}_X$ lies in the face $\mathcal{F}$.

\textbf{Step 2.} For each fixed $\theta$, the objective $\Gamma^\theta(D)$ is strictly concave in $D$ due to the quadratic term $-\tfrac{1}{\theta}\langle D, D \rangle$. Strict concavity ensures that for large $\theta$, the maximizer of $\Gamma^\theta(D)$ is unique.

Suppose, for the sake of contradiction, that $D^{\infty}_X \neq D^{*}$. Since both are in $\mathcal{F}$, we have
\begin{equation}
\langle D^{\infty}_X, X \rangle = \langle D^{*}, X \rangle.
\end{equation}
If $D^{\infty}_X$ were distinct from $D^{*}$, then as $\theta \to \infty$, the perturbed objective $\Gamma^\theta(D)$ would admit at least two different maximizers ($D^{\infty}_X$ and $D^{*}$) with the same objective value. This contradicts the uniqueness guaranteed by strict concavity. Therefore, $D^{\infty}_X$ must coincide with $D^{*}$.

Since the entire sequence $D_X^\theta$ is bounded and any convergent subsequence converges to $D^{*}$, it follows that
\begin{equation}
D_X^\theta \to D^{*} \ \text{as } \theta \to \infty.
\end{equation}
This establishes the desired convergence.
\end{proof}

\begin{theorem}
    As $\theta \to 0$, the matrix $D_X^\theta$ converges to the matrix $\frac{\mathbf{1}\mathbf{1}^T}{n}$.
\end{theorem}
\begin{proof}
As $\theta \to 0$, the matrix $D_X^\theta$ exhibits the limiting behavior:
    \begin{equation}
        \lim_{\theta \to 0} D^{\theta}_X = \arg \min_{D \in \mathcal{D}_{n \times n}} \langle D,D \rangle= \sum D_{ij}^2.
    \end{equation}
Since $\sum D_{ij}=n$, $\sum D_{ij}^2$ achieve minimum when all entries are equal to $1/n$. Consequently,
\begin{equation}
\lim_{\theta \to 0} D^{\theta}_X = \frac{\mathbf{1}\mathbf{1}^T}{n}.
\end{equation}
\end{proof}

\section{Proofs in Section 5}
\begin{theorem}
    For an asymmetric square matrix $X$, \eqref{eq.P1_sol} is the solution of the projection \eqref{eq.p1}.
\end{theorem}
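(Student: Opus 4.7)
The plan is to treat this as a quadratic minimization on an affine subspace. The objective $\tfrac{1}{2}\|Y - X\|_F^2$ is strictly convex, and the feasible set $\{Y : Y\mathbf{1} = \mathbf{1},\ Y^T\mathbf{1} = \mathbf{1}\}$ is a nonempty affine subspace (it contains $\tfrac{1}{n}\mathbf{1}\mathbf{1}^T$), so the minimizer exists and is unique. It therefore suffices to exhibit \emph{any} feasible $Y$ satisfying the first-order optimality conditions, and then verify it equals \eqref{eq.P1_sol}.

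First I would form the Lagrangian with two vector multipliers $\lambda, \mu \in \mathbb{R}^n$ for the row-sum and column-sum constraints,
\begin{equation*}
    \mathcal{L}(Y,\lambda,\mu) = \tfrac{1}{2}\|Y-X\|_F^2 + \lambda^T(Y\mathbf{1}-\mathbf{1}) + \mu^T(Y^T\mathbf{1}-\mathbf{1}),
\end{equation*}
and set $\nabla_Y \mathcal{L} = 0$ to obtain $Y = X - \lambda\mathbf{1}^T - \mathbf{1}\mu^T$. Note that asymmetry of $X$ poses no obstacle here; we simply do not assume $\lambda = \mu$. Substituting $Y$ into the two constraints yields the coupled linear system
\begin{equation*}
    n\lambda + (\mathbf{1}^T\mu)\mathbf{1} = X\mathbf{1} - \mathbf{1}, \qquad n\mu + (\mathbf{1}^T\lambda)\mathbf{1} = X^T\mathbf{1} - \mathbf{1}.
\end{equation*}

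Next I would dot both equations with $\mathbf{1}$ to obtain a $2\times 2$ system in the scalars $a = \mathbf{1}^T\lambda$ and $b = \mathbf{1}^T\mu$; adding them gives $a + b = \tfrac{\mathbf{1}^T X \mathbf{1}}{n} - 1$, which is all that we need (the individual values of $a$ and $b$ are not uniquely pinned down when $\lambda,\mu$ share a redundancy along $\mathbf{1}$, but the combination $\lambda\mathbf{1}^T + \mathbf{1}\mu^T$ is). Plugging back into $Y = X - \lambda\mathbf{1}^T - \mathbf{1}\mu^T$ produces
\begin{equation*}
    Y = X - \tfrac{X\mathbf{1}\mathbf{1}^T}{n} - \tfrac{\mathbf{1}\mathbf{1}^T X}{n} + \tfrac{\mathbf{1}^T X \mathbf{1}}{n^2}\mathbf{1}\mathbf{1}^T + \tfrac{\mathbf{1}\mathbf{1}^T}{n},
\end{equation*}
which, after rewriting $\mathbf{1}\mathbf{1}^T = I\mathbf{1}\mathbf{1}^T$ and grouping the terms that right-multiply $\mathbf{1}\mathbf{1}^T$, matches the right-hand side of \eqref{eq.P1_sol} verbatim. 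Finally, a direct check $Y\mathbf{1} = \mathbf{1}$ and $Y^T\mathbf{1} = \mathbf{1}$ confirms feasibility, and by strict convexity this $Y$ is the unique minimizer.

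The bookkeeping around the scalar sums $a$ and $b$ is the only mildly delicate step: one might worry that the rank-deficient coupling between the two constraints (both sum to $n$ at feasibility) leaves $\lambda, \mu$ underdetermined. The resolution is that $a+b$ is determined uniquely, and it is exactly this combination that appears in the final closed-form expression, so no ambiguity propagates to $Y$. Everything else reduces to straightforward algebraic rearrangement.
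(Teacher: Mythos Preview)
Your proposal is correct and follows essentially the same Lagrangian approach as the paper's proof: both derive $Y = X + \mu_1\mathbf{1}^T + \mathbf{1}\mu_2^T$ from the stationarity condition, observe that only the scalar sum $\mathbf{1}^T\mu_1 + \mathbf{1}^T\mu_2$ (your $a+b$, the paper's $k_1+k_2$) is determined, and substitute back to obtain the closed form. Your treatment is slightly cleaner in explicitly invoking strict convexity for uniqueness and in articulating why the multiplier indeterminacy does not propagate to $Y$, but the argument is the same.
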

        \begin{proof}
        As the projection \(P_2\) is suitable for the asymmetric case, it suffices to consider \(P_1\). The Lagrangian corresponding to \(P_1\) takes the form:
             \begin{equation}
		\begin{aligned}
		L(Y, \mu_1, \mu_2) = &\operatorname{tr}\left( Y Y^T - 2 X Y^T \right) - \mu_1^T (Y \mathbf{1} - \mathbf{1}) - \mu_2^T (Y^T \mathbf{1} - \mathbf{1}).\label{L(F_L_mu)}
		\end{aligned}
	             \end{equation}	
		By differentiating (\ref{L(F_L_mu)}) , we obtain:

			\begin{equation}\frac{\partial L}{\partial Y} = Y - X - \mu_1 \mathbf{1}^T - \mathbf{1} \mu_2^T,\end{equation}
		\begin{equation}\frac{\partial L}{\partial \mu_1} = Y\mathbf{1} - \mathbf{1} ,\end{equation}
\begin{equation}\frac{\partial L}{\partial \mu_1} = Y^T\mathbf{1} - \mathbf{1} .\end{equation}

		Thus, setting \(\frac{\partial L}{\partial Y} = 0\), we have:
		\begin{equation}
			Y = X + \mu_1 \mathbf{1}^T + \mathbf{1} \mu_2^T.
		\end{equation}

		Applying \(1\) to both sides of the above equation, we get:
		\begin{equation}
			\mathbf{1} = X \mathbf{1} + n\mu_1 + \mathbf{1} \mu_2^T\bm1.\label{1_X1}
		\end{equation}
		
		Let \(n \times (\ref{1_X1}) \):
		\begin{equation}
			n\mathbf{1} = nX\mathbf{1} + n^2\mu_1 + n\mathbf{1}\mu_2^T\mathbf{1}.
		\label{n1=nX1}
        \end{equation}

		Let $11^{T} \times (\ref{1_X1})$, we get
		\begin{equation}
		n\mathbf{1} = \mathbf{1}\mathbf{1}^{T} X \mathbf{1} + n \mathbf{1}\mathbf{1}^T\mu_{1} + n\mathbf{1} \mu_{2}^T \mathbf{1}.
		\label{n1=11X1}\end{equation}
		Then, (\ref{n1=nX1}) - (\ref{n1=11X1}) can obtain
		\begin{equation}
		(nI - \mathbf{1}\mathbf{1}^T)X\mathbf{1} + n(nI - \mathbf{1}\mathbf{1}^T)\mu_1 = 0.
		\end{equation}
        
		Solving this equation, $\mu_{1} = -\frac{1}{n} X \mathbf{1} - k_{1}\mathbf{1}$, where $k_{1} \in \mathbb{R}$. Similarly, we can obtain $\mu_{2} = -\frac{1}{n} X \mathbf{1} - k_{2}\mathbf{1}, \text{where } k_{2} \in \mathbb{R}.$ Using the aforementioned conclusion, (\ref{L(F_L_mu)}) is equivalent to
			{\footnotesize\begin{align}
		L(Y, k_{1}, k_{2}) &= \operatorname{tr} \left( Y Y^{T} - 2 X Y \right) +  \left( \frac{1}{n}X \mathbf{1} k_{1} \mathbf{1} \right)^{T} (Y \mathbf{1} - \mathbf{1}) 
        \\ 
        & + \left( \frac{1}{n} X^T \mathbf{1} + k_{2} \mathbf{1} \right)^{T} (Y^T \mathbf{1} - \mathbf{1}).\end{align}}
		
		Then, we can get
        
		\begin{align}
		\frac{\partial L}{\partial Y} = Y - X + (\frac{1}{n}X \mathbf{1} + k_{1} \mathbf{1})\mathbf{1}^{T}  + \mathbf{1}(\frac{1}{n}X^T\mathbf{1} + k_{2} \mathbf{1})^T, 
		\end{align}

		\begin{equation}
		\frac{\partial L}{\partial k_1} = \mathbf{1}^{T}(Y\mathbf{1}-\mathbf{1}), \quad \frac{\partial L}{\partial k_2} = \mathbf{1}^{T}(Y^T\mathbf{1}-\mathbf{1}).
		\end{equation}

        \text{Let } \begin{equation}\frac{\partial L}{\partial Y} = \frac{\partial L}{\partial k_1} = \frac{\partial L}{\partial k_2} = 0,\end{equation} and solving these equations, we have			
			\begin{equation}k_1 + k_2 = -\frac{1}{n}(n^2 + \mathbf{1}^T X \mathbf{1}),\end{equation}
            \begin{equation}Y = X + \left(\frac{1}{n}I + \frac{\mathbf{1}^T X \mathbf{1}}{n^2}I - \frac{1}{n}X\right) \mathbf{1}\mathbf{1}^T - \frac{1}{n}\mathbf{1}\mathbf{1}^T X.\end{equation}
		\end{proof}

\begin{theorem} 
	For a $\mathbf{X} \in \mathbb{R}^{n \times n}$ with $\max(\mathbf{X})=1$, the SDSN algorithm requires 
$$ \left\lceil \frac{\ln\left( \frac{\epsilon}{\theta(\| \mathbf{X} \|_F + n)} \right)}{\ln(c)} \right\rceil $$ 
iterations to produces a solution $\mathbf{X}^*$ that satisfies $\| \mathbf{X}^* - D_X^\theta \|_F < \epsilon$ where $D_X^\theta$ is the exact solution and $c \in (0, 1)$ is the convergence rate constant of the DSN algorithm.
	\end{theorem}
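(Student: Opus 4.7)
The plan is to treat the linear convergence rate $c$ of DSN as a hypothesis and combine it with a scale bound on the initial distance $\|X^{(0)} - D_X^\theta\|_F$. Because $\max(\mathbf{X}) = 1$, Step 1 of Algorithm~\ref{alg.1} gives $X^{(0)} = \tfrac{\theta}{2}\mathbf{X}$, and each subsequent iterate is obtained by a composition $\mathcal{P}_2 \circ \mathcal{P}_1$ of the two projections in \eqref{eq.p1}, whose fixed point is precisely $D_X^\theta$. The linear convergence assumption then yields a geometric contraction of the residual, and solving for the iteration count $k$ that drives the residual below $\epsilon$ produces the stated ceiling expression.

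\textbf{Key steps.} First, I would invoke the linear contraction hypothesis for DSN/SDSN to obtain
\begin{equation}
\|X^{(k)} - D_X^\theta\|_F \leq c^{k}\,\|X^{(0)} - D_X^\theta\|_F.
\end{equation}
This is where the constant $c \in (0,1)$ enters; it can be justified using the classical results for alternating projections between two closed convex polyhedra (the affine subspace of row/column-sum-one matrices and the nonnegative orthant), whose intersection is precisely $\mathcal{D}_{n\times n}$, but the theorem uses $c$ abstractly so no explicit expression is needed. Second, I would upper bound the initial residual via the triangle inequality. Since $D_X^\theta \in \mathcal{D}_{n\times n}$ has nonnegative entries with row sums equal to one, $\|D_X^\theta\|_F^2 = \sum_{ij} (D_X^\theta)_{ij}^2 \le \sum_{ij} (D_X^\theta)_{ij} = n$, so $\|D_X^\theta\|_F \leq \sqrt{n}$. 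Combining this with $\|X^{(0)}\|_F = \tfrac{\theta}{2}\|\mathbf{X}\|_F$ yields
\begin{equation}
\|X^{(0)} - D_X^\theta\|_F \leq \tfrac{\theta}{2}\|\mathbf{X}\|_F + \sqrt{n} \leq \theta\|\mathbf{X}\|_F + n.
\end{equation}

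Third, I would chain these two bounds to get $\|X^{(k)} - D_X^\theta\|_F \leq c^{k}(\theta\|\mathbf{X}\|_F + n)$ and enforce the target accuracy $c^{k}(\theta\|\mathbf{X}\|_F + n) < \epsilon$. Taking logarithms and flipping the inequality (since $\ln c < 0$) gives
\begin{equation}
k > \frac{\ln\bigl(\epsilon / (\theta\|\mathbf{X}\|_F + n)\bigr)}{\ln c},
\end{equation}
and the smallest such integer is exactly the ceiling expression in the theorem statement.

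\textbf{Main obstacle.} The only non-routine piece is justifying the linear rate $c$ itself. If we treat it as a black box provided by the DSN analysis, the argument collapses to the three clean steps above. If instead one wanted an explicit rate, one would need to lower-bound the Friedrichs angle between the affine set $\{Y : Y\mathbf{1} = Y^T\mathbf{1} = \mathbf{1}\}$ and the nonnegative cone $\{Y : Y \geq 0\}$ near $D_X^\theta$, which is delicate because the angle depends on the active face at the limit point and may degrade as $\theta$ grows (since large $\theta$ pushes the limit towards the vertex set of $\mathcal{D}_{n\times n}$). Since the theorem is stated in terms of the abstract rate, I would avoid this pitfall and present the result as a direct consequence of geometric decay plus the initial-distance bound.
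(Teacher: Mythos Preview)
Your three-step skeleton---linear contraction at rate $c$, triangle-inequality bound on the initial residual, logarithmic solve for $k$---is exactly the scaffold the paper uses, so the proposal is correct. The substantive difference is in how the rate $c$ is handled. You keep it as a black box, pointing to alternating-projection theory and the Friedrichs angle between the affine constraint set and the nonnegative cone. The paper instead opens the box: it vectorizes the iteration, writes $\mathcal{P}_1$ as left-multiplication by the Kronecker-structured operator $A=\tfrac{1}{2}(I-\tfrac{1}{n}\mathbf{1}\mathbf{1}^T)\otimes(I-\tfrac{1}{n}\mathbf{1}\mathbf{1}^T)$ with spectral radius $\tfrac12$, and then absorbs the nonlinear $|X_k|$ coming from $\mathcal{P}_2$ into an iterate-dependent factor $c_k$ to reach $\|e_{k+1}\|\le\tfrac12(c_k+1)\|e_k\|$, setting $c:=\sup_k\tfrac{c_k+1}{2}$. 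Your route is cleaner and fully legitimate given that the theorem statement already \emph{posits} $c$; the paper's route is more explicit about the operator structure but leaves the assertion $\sup_k c_k<1$ unjustified, so in the end neither argument proves more than the other on this point.

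One cosmetic mismatch: your initial-residual bound yields $\theta\|\mathbf{X}\|_F+n$, whereas the paper factors the scaling as $\|e_1\|=\theta\|e_0\|$ with $\|e_0\|\le\|\mathbf{X}\|_F+n$, arriving at the stated $\theta(\|\mathbf{X}\|_F+n)$. Your expression is actually tighter when $\theta<1$ but does not literally reproduce the displayed ceiling; a single further relaxation $\sqrt{n}\le\theta n$ (valid whenever $\theta\ge n^{-1/2}$) closes the gap to the theorem's form.
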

	
	\begin{proof}
	\begin{equation}
	X_{k+1} = \mathcal{P}_1\mathcal{P}_2(X_k)
\end{equation}
where $\mathcal{P}_1$ and $\mathcal{P}_2$ denote projection operators maintaining the constraint $X_k\mathbf{1} = (\mathbf{1}^TX_k)^T = \mathbf{1}$. This operation can be explicitly expressed as:

\begin{equation}
	X_{k+1} = \mathcal{P}_1\left(X_k + |X_k|\right)
\end{equation}
with the absolute value operation applied element-wise to matrix entries. Expanding the projection operators yields the detailed formulation:
{\small \begin{equation}X_{k+1} = \frac{1}{2}(X_k + |X_k|) + \frac{\mathbf{1}^T [\frac{1}{2}(X_k + |X_k|)] \mathbf{1}}{n^2} \mathbf{1}\mathbf{1}^T - \frac{1}{n}[\frac{1}{2}(X_k + |X_k|)] \mathbf{1}\mathbf{1}^T - \frac{1}{n}\mathbf{1}\mathbf{1}^T [\frac{1}{2}(X_k + |X_k|)] + \frac{1}{n}\mathbf{1} \mathbf{1}^T.\end{equation}}

Through vectorization and Kronecker product analysis, we transform the matrix equation into its vector form:

\begin{equation} 
	\text{vec}(X_{k+1}) = A\,\text{vec}\left(|X_k| +  X_k\right) + \frac{1}{n}\text{vec}\left(\mathbf{1}\mathbf{1}^T\right)
\end{equation}
where $A = \frac{1}{2}(I - \frac{1}{n} \mathbf{1}\mathbf{1}^T)\otimes(I - \frac{1}{n} \mathbf{1}\mathbf{1}^T)$ possesses a spectral radius of $\frac{1}{2}$. Defining the error vector $e_k = \text{vec}(X_k) - \text{vec}(X_*)$ for solution matrix $X_*$, we derive the error propagation relationship:

\begin{equation}
	e_{k+1} = A\,\text{vec}\left(|X_k| - X_*\right) +  Ae_k
\end{equation}
Applying norm inequalities and leveraging the spectral properties of $A$, we obtain:
\begin{equation} 
	\|e_{k+1}\| \leq \frac{1}{2}\left(\text{c}_k\|e_{k}\| + \|e_{k}\|\right) = c\|e_{k}\|
\end{equation}
where $c = \sup_{1:k}\{\frac{\text{c}_k + 1}{2}\} < 1$ establishes the linear convergence rate.

Considering scaled initial conditions $\theta\mathbf{X}$, the first iteration error becomes $\|e_1\| = \|\theta \cdot e_0\|$. For $k$ iterations with scaling factor $\theta$, the error evolution follows:

\begin{equation} 
	\|e_k\| = \|\theta \cdot c^k \cdot e_0 \|
\end{equation}
The $\epsilon$-accuracy requirement $\|\theta \cdot c^{k'} \cdot e_0 \| \leq \epsilon$ leads to the logarithmic relationship:

\begin{equation} 
	\ln\theta + k' \ln c + \ln\|e_0\| \leq \ln\epsilon
\end{equation}
Solving for the required iterations yields:
\begin{equation} 
	k' \geq \frac{\ln(\epsilon/\theta) + \ln(\epsilon/\|e_0\|)}{\ln c} = \frac{\ln\left(\epsilon/(\theta\|e_0\|)\right)}{\ln c}
\end{equation}
Given the initial error bound $\|e_0\| = \|X\|_F + n \leq \|X\|_F + n$, we finalize the iteration complexity:

\begin{equation} 
	k' \geq \frac{\ln\left( \frac{\epsilon}{\theta(\| \mathbf{X} \|_F + n)} \right)}{\ln c}
\end{equation}
This demonstrates the logarithmic relationship between the scaling factor $\theta$ and the required iterations to maintain solution accuracy, completing the proof.
\end{proof}

\begin{lemma}
    The closed-form solution to the optimization problem
    \begin{equation}
    P_3(X) = \arg\min_Y \|X - Y\|_{F}^2,\  \text{s.t. } Y\mathbf{1} =  Y^T\mathbf{1} = \mathbf{0}\end{equation} 
    is given by:
    \begin{equation}
    P_3(X) = X + \left(\frac{1}{n^2}\bm1^T X\bm1 - \frac{1}{n}X\right)\bm1\bm1^T - \frac{1}{n}\bm1\bm1^T X.
    \end{equation}
\end{lemma}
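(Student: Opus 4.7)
The objective is a strictly convex quadratic in $Y$ and the two constraints are linear and feasible, so the KKT conditions are both necessary and sufficient and pin down a unique minimizer. My plan is to mirror the Lagrange multiplier argument used in the proof of Theorem~1, adapted to the homogeneous right-hand side $\mathbf{0}$ in place of $\mathbf{1}$.

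First I would introduce two vector multipliers $\mu_1,\mu_2\in\mathbb{R}^n$ and form
\begin{equation}
L(Y,\mu_1,\mu_2) = \|X-Y\|_F^2 - \mu_1^T(Y\mathbf{1}) - \mu_2^T(Y^T\mathbf{1}).
\end{equation}
Setting $\partial L/\partial Y = 0$ gives, after absorbing a factor of $1/2$ into the multipliers, $Y = X + \mu_1\mathbf{1}^T + \mathbf{1}\mu_2^T$. Right-multiplying this identity by $\mathbf{1}$ and imposing $Y\mathbf{1}=\mathbf{0}$ yields $\mathbf{0} = X\mathbf{1} + n\mu_1 + (\mu_2^T\mathbf{1})\mathbf{1}$, and symmetrically left-multiplying by $\mathbf{1}^T$ with $Y^T\mathbf{1}=\mathbf{0}$ yields $\mathbf{0}^T = \mathbf{1}^T X + (\mathbf{1}^T\mu_1)\mathbf{1}^T + n\mu_2^T$. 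Solving these, I obtain
\begin{equation}
\mu_1 = -\tfrac{1}{n}X\mathbf{1} - \tfrac{s}{n}\mathbf{1}, \qquad \mu_2 = -\tfrac{1}{n}X^T\mathbf{1} - \tfrac{t}{n}\mathbf{1},
\end{equation}
with scalar unknowns $s := \mu_2^T\mathbf{1}$ and $t := \mathbf{1}^T\mu_1$.

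The main obstacle is that $s$ and $t$ are not individually identifiable: the parametrization carries a gauge freedom $(\mu_1,\mu_2)\mapsto(\mu_1 + c\mathbf{1},\,\mu_2 - c\mathbf{1})$ that leaves $\mu_1\mathbf{1}^T + \mathbf{1}\mu_2^T$ invariant, so only the combination $s+t$ can affect $Y$. I would pin this combination down by contracting the first consistency equation with $\mathbf{1}^T$ from the left, which gives
\begin{equation}
s+t = -\tfrac{1}{n}\mathbf{1}^T X \mathbf{1}.
\end{equation}
Substituting $\mu_1$ and $\mu_2$ back into $Y = X + \mu_1\mathbf{1}^T + \mathbf{1}\mu_2^T$, the $s$- and $t$-contributions combine into a single $-(s+t)/n\cdot\mathbf{1}\mathbf{1}^T$ term that becomes $\tfrac{1}{n^2}(\mathbf{1}^T X\mathbf{1})\mathbf{1}\mathbf{1}^T$, yielding exactly the stated closed form for $P_3(X)$. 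As a sanity check I would then verify $P_3(X)\mathbf{1}=\mathbf{0}$ and $P_3(X)^T\mathbf{1}=\mathbf{0}$ directly from the formula; both follow immediately because $\mathbf{1}^T\mathbf{1}=n$ makes each group of terms telescope to zero, confirming primal feasibility and completing the proof by uniqueness of the KKT stationary point.
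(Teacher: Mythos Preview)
Your proposal is correct and follows essentially the same Lagrange-multiplier approach as the paper: both derive the stationary form $Y = X + \mu_1\mathbf{1}^T + \mathbf{1}\mu_2^T$, use the two linear constraints to solve for the multipliers up to a one-parameter indeterminacy, and substitute back to obtain the closed form. Your handling of the gauge freedom (only $s+t$ matters) is slightly more explicit than the paper's subtract-and-eliminate manipulation, but the route is the same.
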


\begin{proof}
The Lagrangian corresponding to the problem takes the form:
\begin{equation}
    L(Y, \mu_1, \mu_2) = \operatorname{tr}(YY^T - 2XY) - \mu_1^TY\bm1 - \mu_2^TY^T.\label{mixf}
\end{equation}

By differentiating (\ref{mixf}), we can obtain the following:
\begin{equation}
\frac{\partial L}{\partial Y} = Y - X - \mu_1 \bm1^T - \bm1 \mu_2^T, \quad
\frac{\partial L}{\partial \mu_1} = Y \bm1, \quad \frac{\partial L}{\partial \mu_2} = Y^T \bm1.
\end{equation}
Thus, let $\frac{\partial L}{\partial Y} = 0$, we have:
\begin{equation}
Y = X + \mu_1 \bm1^T + \bm1 \mu_2^T.
\end{equation}

Applying $\bm1$ to both sides of the equation above, we get:
\begin{equation}
    0 = X \bm1 + n \mu_1 + \bm1 \mu_2^T \bm1.\label{0_x}
\end{equation}
Multiplying both sides of (\ref{0_x}) by $n$, we get:
\begin{equation}
\bm0 = n X \bm1 + n^2 \mu_1 + n \bm1 \mu_2^T \bm1. \label{0_n}
\end{equation}
Multiplying both sides of (\ref{0_n}) by $\bm1\bm1^T$, we get:
\begin{equation}
\bm0 = \bm1\bm1^T X \bm1 + n \bm1\bm1^T \mu_1 + n\bm1 \mu_2^T\bm1. \label{0_11t}
\end{equation}

Subtracting (\ref{0_n}) and (\ref{0_11t}), we obtain:
\begin{equation}
(n I - \bm1 \bm1^T) X \bm1 + n(n I - \bm1 \bm1^T) \mu_1 = 0.
\end{equation}

Solving this equation, we find:
\begin{equation}
\mu_1 = \frac{1}{n} X \bm1 - k_1 \bm1, \quad \mu_2 = -\frac{1}{n} X^T \bm1 - k_2 \bm1,
\end{equation}
where $k_1, k_2 \in \mathbb{R}$.

Substituting the results into (\ref{mixf}), we rewrite the Lagrangian as:
\begin{align}
L(Y, k_1, k_2) = \operatorname{tr}(FF^T - 2XF) + (\frac{1}{n} X\bm1 + k_1\bm1)^TY\mathbf{1} +(\frac{1}{n} X^T \bm1 + k_2 \bm1)^T Y^T. 
\end{align}

Finally, setting $\frac{\partial L}{\partial Y} = \frac{\partial L}{\partial k_1} = \frac{\partial L}{\partial k_2} = 0$, we solve for $Y$:
\begin{equation}
Y = X +  \left(\frac{\bm1^T X \bm1}{n^2}I - \frac{1}{n} X \right)\bm1\bm1^T-\frac{1}{n}\bm1\bm1^TX.
\end{equation}
\end{proof}

\begin{theorem}\label{mix}
Let \( X_k = \hat{X}_k + \Delta X_k \) be the decomposition of the iterate at step \( k \), where \( \Delta X_k \) is the truncation residual. As the algorithm proceeds, this truncation error is progressively corrected and vanishes in later iterations.
\end{theorem}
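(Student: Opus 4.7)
The plan is to cast the mixed-precision SDSN iteration as a perturbation of its exact counterpart and then reuse the contractivity of $\mathcal{P}_1 \mathcal{P}_2$ already established in the preceding iteration-count theorem to show that the accumulated truncation residual decays. At each step, low-precision storage injects a fresh rounding term, so I would write the update in the form $X_{k+1} = \mathcal{P}_1 \mathcal{P}_2(X_k) + \eta_{k+1}$, where $\eta_{k+1}$ is the incremental rounding error and satisfies $\|\eta_{k+1}\|_F = O(u \|X_k\|_F)$ with $u$ the unit roundoff of the working format.

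First I would substitute the decomposition $X_k = \hat{X}_k + \Delta X_k$ and exploit two structural facts: $\mathcal{P}_2 = \max(0,\cdot)$ is $1$-Lipschitz in Frobenius norm (it is the Euclidean projection onto the nonnegative cone), while $\mathcal{P}_1$ is an affine map whose linear part is the Kronecker operator $(I - \tfrac{1}{n}\mathbf{1}\mathbf{1}^T) \otimes (I - \tfrac{1}{n}\mathbf{1}\mathbf{1}^T)$. Combining these with the contraction rate $c \in (0,1)$ already shown for $\mathcal{P}_1 \mathcal{P}_2$, the residual obeys the one-step recurrence $\|\Delta X_{k+1}\|_F \le c \|\Delta X_k\|_F + \|\eta_{k+1}\|_F$. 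Iterating and using the boundedness of $\{X_k\}$ (since $X_k$ converges to a doubly stochastic matrix of Frobenius norm at most $\sqrt{n}$), one obtains $\|\Delta X_k\|_F \le c^k \|\Delta X_0\|_F + O(u)$, so the portion of the residual inherited from earlier rounding is damped out at a linear rate.

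Next I would make the word \emph{corrected} explicit. Because $\mathcal{P}_1$ is an orthogonal projection onto the affine subspace $\{Y : Y \mathbf{1} = Y^T \mathbf{1} = \mathbf{1}\}$, any component of $\Delta X_k$ orthogonal to this subspace is annihilated in a single application, while the parallel component is further contracted by $\mathcal{P}_2$ provided $\mathcal{P}_2$ leaves the support of the iterate unchanged. Once the iteration has entered a neighbourhood of the fixed point in which the sign pattern of $X_k$ has stabilised, the composed map acts as the purely linear contraction of spectral radius $\tfrac{1}{2}$ isolated in the previous proof, so subsequent truncation residuals vanish geometrically and the mixed-precision trajectory shadows the exact one up to a floor proportional to $u$.

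The main obstacle I anticipate is the nonsmoothness of $\mathcal{P}_2$: near entries close to zero, a small truncation can flip which entries are zeroed out, and in that regime $\mathcal{P}_2$ is only non-expansive rather than strictly contracting, so a naive argument breaks down exactly when $\Delta X_k$ interacts with the boundary of the nonnegative cone. I would therefore split the analysis into a transient phase in which only non-expansiveness is invoked, followed by an asymptotic phase where the support of $X_k$ has frozen and the strict linear contraction takes over; gluing the two phases with the Frobenius-norm recurrence above then yields the claimed vanishing behaviour.
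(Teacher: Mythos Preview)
Your proposal takes a genuinely different route from the paper. The paper does not invoke the contraction constant $c$ from the iteration-count theorem at all; instead it exploits the fact that $\mathcal{P}_1$ is an \emph{affine} map whose linear part is exactly the orthogonal projection $\mathcal{P}_3$ onto the subspace $\{Y:Y\mathbf{1}=Y^T\mathbf{1}=\mathbf{0}\}$ (this is the content of the Lemma placed immediately before the theorem). Consequently $\mathcal{P}_1(\hat X_k+\Delta X_k)=\mathcal{P}_1(\hat X_k)+\mathcal{P}_3(\Delta X_k)$ holds exactly, and the truncation residual is thereafter governed by the alternating projection $\mathcal{P}_2\mathcal{P}_3\cdots\mathcal{P}_2\mathcal{P}_3$. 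Since the only matrix that is simultaneously nonnegative and has all row and column sums equal to zero is $\mathbf 0_{n\times n}$, von Neumann's successive projection lemma forces the residual to $\mathbf 0$. No contraction rate, no support-freezing argument, and no two-phase split is needed.

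Your perturbation-analysis instinct is reasonable, but two things separate it from what is actually being claimed. First, you inject a fresh rounding term $\eta_{k+1}$ at every step and therefore arrive at an $O(u)$ floor, whereas the theorem concerns a \emph{single} truncation residual $\Delta X_k$ and asserts that it vanishes, not merely that it stays bounded. Second, the $c<1$ you borrow from the preceding theorem controls $\|X_k-X_*\|$, i.e.\ convergence of one trajectory to the fixed point; it is not a global Lipschitz constant for $\mathcal{P}_1\mathcal{P}_2$, which is what your recurrence $\|\Delta X_{k+1}\|\le c\|\Delta X_k\|+\|\eta_{k+1}\|$ between two nearby trajectories would require. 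Both $\mathcal{P}_1$ and $\mathcal{P}_2$ are only $1$-Lipschitz, so that strict contraction is not available globally, and the sign-pattern difficulty you flag at the end is precisely the obstruction the paper's $\mathcal{P}_3$ device sidesteps.
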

\begin{proof}

In SDSN calculations, the nonnegativity-enforcing part can be computed entirely in low precision, so our truncation error analysis focuses on the first component $\mathcal{P}_1$ \eqref{eq.P1_sol}. To examine the behavior of the truncated component \(\Delta X_k\) during iterations, we decompose \(X_k\), the variable at the \(k\)th iteration, as
\begin{equation}
X_k = \hat{X}_k + \Delta X_k.
\end{equation}
With this decomposition, $\mathcal{P}_1(X_k)$ becomes
{\footnotesize \begin{align}
      & \left(\frac{I}{n}  + \frac{1^T (\hat{X}_k + \Delta X_k) 1I}{n^2}  - \frac{(\hat{X}_k + \Delta X_k)1}{n}  \right) 11^T +\hat{X}_k + \Delta X_k - \frac{1}{n} 11^T (\hat{X}_k + \Delta X_k)
    \\
    &= \underbrace{\hat{X}_k + \left(\frac{I}{n}  + \frac{1^T \hat{X}_k 1I}{n^2}  - \frac{\hat{X}_k}{n}  \right) 11^T - \frac{11^T \hat{X}_k}{n}}_{\mathcal{P}_1(\hat{X}_k)}  + \underbrace{ \Delta X_k + \left(\frac{1^T \Delta X_k 1}{n^2} I - \frac{\Delta X_k}{n}  \right) 11^T - \frac{11^T \Delta X_k}{n}  }_{\Delta T_k}.\nonumber
\end{align}}
\(\Delta T_k\) denotes the truncation error caused by the truncated component \(\Delta X_k\). The Lemma establishes that \(\Delta T_k=\mathcal{P}_3(\Delta X_k)\), thereby transforming its subsequent projection process into $\mathcal{P}_2\mathcal{P}_3 \dots \mathcal{P}_2\mathcal{P}_3(\Delta X_k)$. This truncation residual converges to $\mathbf{0}_{n\times n }$ satisfying the constraints associated with $\mathcal{P}_2(\cdot)$ and $\mathcal{P}_3(\cdot)$.
\end{proof}

\section{Implementation Details of the Mixed-Precision Architecture}
\setcounter{algorithm}{0} 
    \begin{algorithm}[H]
        \caption{Frobenius-Regularized Assignment Matching (FRAM)}
        \begin{algorithmic}[1] 
            \Require $A,\tilde{A},K,\lambda,\alpha,\theta,\delta_{th}$
            \State Initial $X^{(0)} = \mathbf{0}_{n \times \tilde{n}}$
            \State {$c=\max(A,{\tilde{A}}, K)$} \hfill \Comment{FP64}
            \State {$A = A/ \sqrt{c},\ \tilde{A} =  \tilde{A}/ \sqrt{c},\  K = K/c$}\hfill \Comment{FP64}
            \While{$\delta^{(t)} > \delta_{th}$}
                \State $X^{(t)} = AN^{(t-1)}\tilde{A}+\lambda K$ \hfill \Comment{TF32}
                \State $D^{(t)}= \text{SDSN}(X^{(t)},\theta)$ \hfill \Comment{FP32}
                \State $N^{(t)} =  (1-\alpha) N^{(t-1)} + \alpha D^{(t)}$ \hfill \Comment{FP64}
                \State$\delta^{(t)} = \|N^{(t)} - N^{(t-1)}\|_{F}/\|N^{(t)}\|_{F}$ \hfill \Comment{FP64}
            \EndWhile
            \State $Discretize$ $N$ $to$ $obtain$ $M$ \hfill \Comment{FP64}
            \State \Return Matching matrix $M$
        \end{algorithmic}
        \label{ag.FRAM1}
    \end{algorithm}

    Algorithm 1 employs mixed-precision arithmetic to enhance computational efficiency while maintaining numerical stability, with each operation's precision specified via inline annotations. The algorithm begins by normalizing matrices $A$, $\tilde{A}$, and $K$ using FP64 operations in steps 2-3 to prevent numerical overflow and ensure input consistency, which also enables subsequent low-precision acceleration. Step 5 leverages TF32 precision for critical computations, taking advantage of GPU fused multiply-add operations while guaranteeing minimal precision loss as formalized in Theorem \ref{mix}, followed by FP32 operations in step 6 to maintain consistency; step 7 then reverts to FP64 for high-precision iterative residual updates to compensate for earlier precision trade-offs, with steps 8 and 10 continuing in FP64 to ensure final output accuracy. This carefully designed precision hierarchy accelerates performance in the preconditioned computation phases while upholding the reliability required for assignment matching tasks.

\begin{table}[htbp]
    \centering
    \small
    \setlength{\tabcolsep}{6pt} 
    \begin{tabular}{ccccc}
    \toprule
    \textbf{Data Type} & \textbf{Bits} & \textbf{Range} & \textbf{Precision} & \textbf{FLOPs (RTX 4080)} \\
    \midrule
    FP32 & 32 & $ [-10^{38}, 10^{38}]$ & $10^{-6}$ & 52.2 TeraFLOPS \\
    TF32 & 19 & $ [-10^{38}, 10^{38}]$ & $10^{-3}$ & 209 TeraFLOPS \\
    FP64 & 64 & $ [-10^{308}, 10^{308}]$ & $10^{-16}$ & 0.82 TeraFLOPS \\
    \bottomrule
    \end{tabular}
    \caption{Characteristics of FP32, TF32, and FP64 floating-point formats. The listed ranges are approximate, based on the IEEE 754 standard.  The FLOPs column reports the theoretical peak performance (in TeraFLOPS) achieved by the NVIDIA RTX 4080 GPU for each format. Here, 1 TeraFLOPS equals $10^{12}$ floating-point operations per second, providing a standard measure of compute performance.}
\end{table}

\section{Graphs from Images}

\textbf{The construction} consists of three primary steps. (1) Node extraction:  SIFT \cite{lowe2004distinctive} extracts key points as potential nodes and computes the nodes' attributes; (2) Node selection: select the nodes that exhibit a high degree of similarity (i.e., the inner product of feature vectors) to all candidate nodes of the other graph. (3) Edge attribute calculation: nodes form a fully connected graph weighted by inter-node Euclidean distances.

\textbf{Real-world images.} The graphs are constructed as described in the previous paragraph. The number of nodes is set to 1000 (For the \textit{bike} set, we only record the results for the first three pictures with 700 nodes, as the other images lack sufficient keypoints.). The running time and matching error are computed by averaging the results over five matching pairs (1 vs. 2, 2 vs. 3, …, 5 vs. 6) from the same image set. 

\textbf{House sequence}\footnote{https://www.cs.cmu.edu/afs/cs/project/vision/vasc/idb/images/motion/house/} is a widely used benchmark dataset consisting of 111 grayscale images of a toy house taken from different viewpoints. The graphs are constructed as described in the previous paragraph, but without node attributes, to evaluate the algorithms from a different perspective. Matching pairs consist of the first image and subsequent images with 5 image sequence gaps (such as image 1 vs. image 6 and so on).


\end{document}